\newcommand{\set}[1]{\left\{#1\right\}}
\newcommand{\brac}[1]{\left [#1\right ]}
\newcommand{\Real}{\mathbb R}
\newcommand{\too}{\rightarrow}
\def\eqref#1{equation~\ref{#1}}
\def\1{\bm{1}}
\DeclareMathAlphabet{\mathsfit}{\encodingdefault}{\sfdefault}{m}{sl}
\SetMathAlphabet{\mathsfit}{bold}{\encodingdefault}{\sfdefault}{bx}{n}
\def\gA{{\mathcal{A}}}
\def\gM{{\mathcal{M}}}
\def\gS{{\mathcal{S}}}
\def\gT{{\mathcal{T}}}
\def\sT{{\mathbb{T}}}
\newcommand{\E}{\mathbb{E}}
\newcommand{\R}{\mathbb{R}}
\definecolor{linkcolor}{RGB}{74, 102, 146}
\definecolor{mygray}{gray}{0.95}
\newcommand*{\eg}{{\it e.g.}\@\xspace}
\newcommand*{\ie}{{\it i.e.}\@\xspace}
\newcommand{\norm}[1]{\left\lVert#1\right\rVert}
\newcommand*{\tran}{^{\mkern-1.5mu\mathsf{T}}}
\renewcommand{\div}{\text{div}}
\title{Neural Conservation Laws: \\A Divergence-Free Perspective}
\author{%
  Jack Richter-Powell \\
  Vector Institute \\
  \texttt{jack.richter-powell@mcgill.ca} \\
  \And
  Yaron Lipman \\
  Meta AI \\
  \texttt{ylipman@meta.com} \\
  \And
  Ricky T. Q. Chen \\
  Meta AI \\
  \texttt{rtqichen@meta.com} \\
}
\begin{document}

\maketitle

\begin{abstract}
We investigate the parameterization of deep neural networks that by design satisfy the continuity equation, a fundamental conservation law.
This is enabled by the observation that any solution of the continuity equation can be represented as a divergence-free vector field.
We hence propose building divergence-free neural networks through the concept of differential forms, and with the aid of automatic differentiation, realize two practical constructions.
As a result, we can parameterize pairs of densities and vector fields that always exactly satisfy the continuity equation, foregoing the need for extra penalty methods or expensive numerical simulation. 
Furthermore, we prove these models are universal and so can be used to represent any divergence-free vector field.
Finally, we experimentally validate our approaches by computing neural network-based solutions to fluid equations, solving for the Hodge decomposition, and learning dynamical optimal transport maps.
\end{abstract}

\section{Introduction}

Modern successes in deep learning are often attributed to the expressiveness of black-box neural networks. These models are known to be universal function approximators~\citep{hornik1989multilayer}---but this flexibility comes at a cost. 
In contrast to other parametric function approximators such as Finite Elements \citep{Schroeder_2017}, it is hard to \textit{bake} exact constraints into neural networks. 
Existing approaches often resort to penalty methods to approximately satisfy constraints---but these increase the cost of training and can produce inaccuracies in downstream applications when the constraints are not exactly satisfied. 
For the same reason, theoretical analysis of soft-constrained models also becomes more difficult.
On the other hand, enforcing hard constraints on the architecture can be challenging, and even once enforced, it is often unclear whether the model remains sufficiently expressive within the constrained function class.

In this work, we discuss an approach to directly bake in two constraints into deep neural networks: (i) \textbf{having a divergence of zero}, and (ii) \textbf{exactly satisfying the continuity equation}. 
One of our key insights is that \textit{the former directly leads into the latter}, so the first portion of the paper focuses on divergence-free vector fields.
These represent a special class of vector fields which have widespread use in the physical sciences. 
%\eg, Maxwell's equations from electromagnetism describe the magnetic field as divergence-free, and 
In computational fluid dynamics, divergence-free vector fields are used to model incompressible fluid interactions formalized by the Euler or Navier-Stokes equations.
%While potential fields---complements of divergence-free fields (in a sense made precise by the Hodge decomposition)---can be parameterized by using automatic differentiation to compute the gradient of a scalar parametric function, a general construction for divergence-free fields is missing from the machine learning literature. 
In $\R^3$ we know that the curl of a vector field has a divergence of zero, which has seen many uses in graphics simulations (\eg, \citet{eisenberger_divergence-free_2018}). 
Perhaps less well-known is that lurking behind this fact is the generalization of divergence and curl through differential forms \citep{cartan1899some}, and the powerful identity $d^2=0$.
We first explore this generalization, then discuss two constructions derived from it for parmaterizing divergence-free vector fields. While this approach has been partially discussed previously~\citep{barbarosie2011representation,kelliher2021stream}, it is not extensively known and to the best of our knowledge has not been explored by the machine learning community. 

Concretely, we derive two approaches---visualized in \Cref{fig:fig1}---to transform sufficiently smooth neural networks into divergence-free neural networks, made efficient by automatic differentiation and recent advances in vectorized and composable transformations~\citep{jax2018github,functorch2021}.
Furthermore, both approaches can theoretically represent any divergence-free vector field.

We combine these new modeling tools with the observation that solutions of the continuity equation---a partial differential equation describing the evolution of a \textit{density} under a \textit{flow} ---can be characterized jointly as a divergence-free vector field. As a result, we can parameterize neural networks that, by design, always satisfy the continuity equation, which we coin Neural Conservation Laws (NCL). While prior works either resorted to penalizing errors \citep{raissi_physics-informed_2019} or numerically simulating the density given a flow \citep{chen2018neural}, baking this constraint directly into the model allows us to forego extra penalty terms and expensive numerical simulations. 

\begin{figure}
    \centering
    \begin{subfigure}[b]{0.18\linewidth}
        \includegraphics[width=\linewidth]{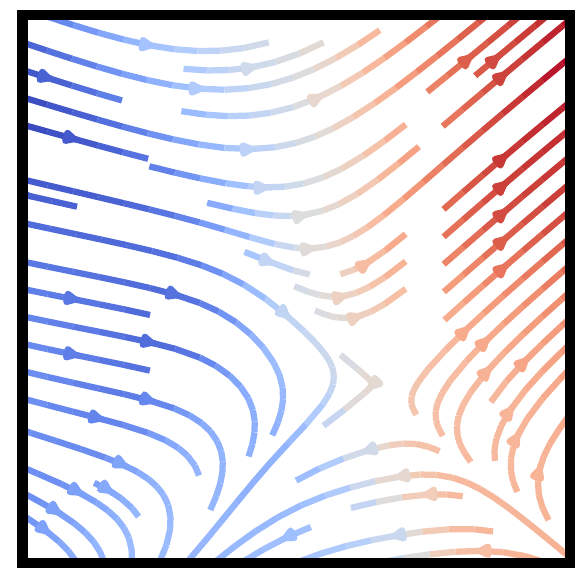}
        \caption*{$b$}
    \end{subfigure}%
    \begin{subfigure}[b]{0.14\linewidth}
        \centering
        \begin{tikzpicture}[overlay,remember picture]
        \draw [-{Latex[round, scale=1.5]}] (-0.9,1.7) -- (0.9,1.7) 
        node [midway,above] {$A = J_b - J_b\tran$}
        node [midway,below] {\cref{e:J_JT}};
        \end{tikzpicture}
    \end{subfigure}%
    \begin{subfigure}[b]{0.18\linewidth}
        \includegraphics[width=\linewidth]{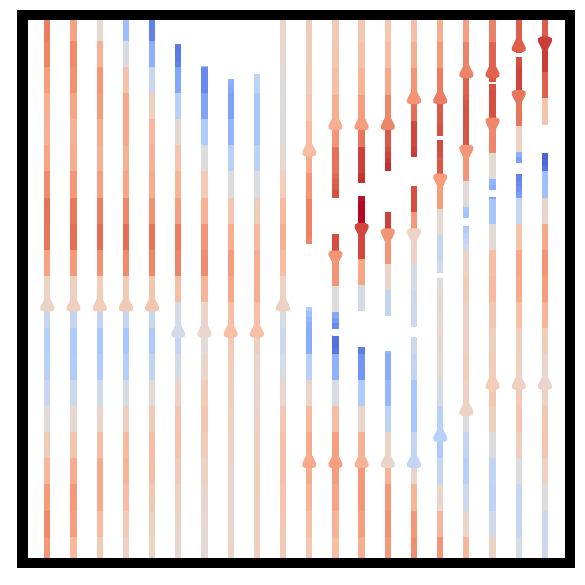}
        \caption*{$A_1$}
    \end{subfigure}%
    \begin{subfigure}[b]{0.18\linewidth}
        \includegraphics[width=\linewidth]{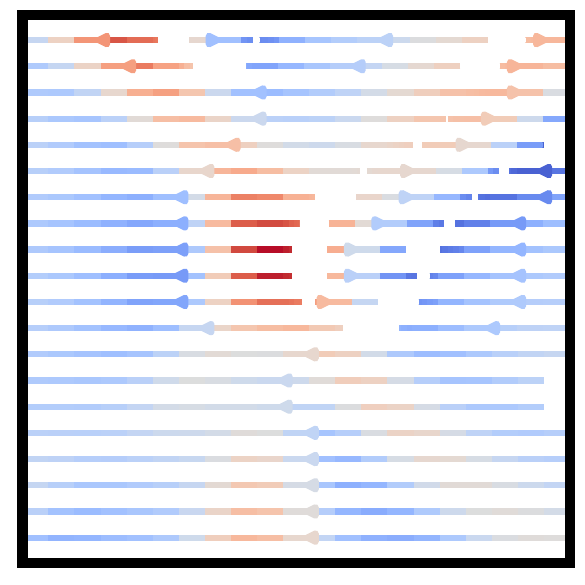}
        \caption*{$A_2$}
    \end{subfigure}%
    \begin{subfigure}[b]{0.14\linewidth}
        \centering
        \begin{tikzpicture}[overlay,remember picture]
        \draw [-{Latex[round,scale=1.5]}] (-0.9,1.7) -- (0.9,1.7) node [midway,above] {$v = \div(A)$}
        node [midway,below] {\cref{eq:div_A}};
        \end{tikzpicture}
    \end{subfigure}%
    \begin{subfigure}[b]{0.18\linewidth}
        \includegraphics[width=\linewidth]{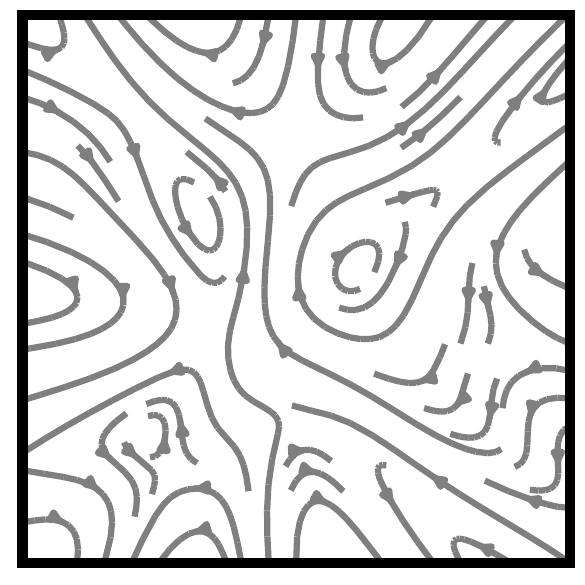}
        \caption*{$v$}
    \end{subfigure}
    \caption{Divergence-free vector fields $v:\R^d \rightarrow \R^d$ can be constructed from an antisymmetric matrix field $A:\R^d \rightarrow \R^{d\times d}$ or an arbitrary vector field $b:\R^d \rightarrow \R^d$. $J_b$ represents the Jacobian matrix of $b$, and $A_1$ and $A_2$ are the first and second rows of $A$. Color denotes divergence.}
    \label{fig:fig1}
\end{figure}

\section{Constructing divergence-free vector fields}
\label{sec:model-spec}

We will use the notation of \emph{differential forms} in $\Real^n$ for deriving the divergence-free and universality properties of our vector field constructions. 
We provide a concise overview of differential forms in \Cref{sec:prelim:k-forms-on-Rn}.
For a more extensive introduction see \eg, \citet{do1998differential,morita2001geometry}. 
Without this formalism, it is difficult to show either of these two properties; however, readers who wish to skip the derivations can jump to the matrix formulations in \eqref{eq:div_A} and \eqref{e:J_JT}.

Let us denote
$\gA^k(\Real^n)$ as the space of differential $k$-forms, $d:\gA^k(\Real^n)\too\gA^{k+1}(\Real^n)$ as the exterior derivative, and $\star:\gA^{k}(\Real^n)\too \gA^{n-k}(\Real^n)$ as the Hodge star operator that maps each $k$-form to an ($n$-$k$)-form. Identifying a vector field $v:\Real^d\too \Real^d$ as a $1$-form, $v=\sum_{i=1}^n v_i dx_i$, we can express the divergence $\div(v)$ as the composition $d \star v$. 

To parameterize a divergence-free vector field $v: \Real^n\too \Real^n$, we note that by the fundamental property of the exterior derivative, taking an arbitrary ($n$-$2$)-form $\mu\in \gA^{n-2}(\Real^n)$ we have that 
\begin{equation}
    0=d^2\mu = d(d\mu) 
\end{equation}
and since $\star$ is its own inverse up to a sign, it follows that 
%\sout{therefore the vector field (1-form) $v$ defined via $\star v = d\mu$ is divergence-free. Applying the hodge star to both sides we get that $v = (-1)^{n-1}\star d\mu$ is divergence-free. The same is true without the constant $(-1)^{n-1}$, so it suffices to take}
\begin{equation}\label{e:star_d_mu}
    v=\star d\mu
\end{equation}
is divergence free. 
We write the parameterization $v=\star d\mu$ explicitly in coordinates.
% Since a basis for $\gA^{n-2}(\Real^n)$ can be chosen to be  $\star (dx_i\wedge dx_j)$ for all $i<j$ we can write $\mu=\sum_{i<j}\mu_{ij} \star (dx_i\wedge dx_j)$. 
Since a basis for $\gA^{n-2}(\Real^n)$ can be chosen to be  $\star (dx_i\wedge dx_j)$, we can write $\mu=\frac{1}{2}\sum_{i,j=1}^n\mu_{ij} \star (dx_i\wedge dx_j)$, where $\mu_{ji}=-\mu_{ij}$
Now a direct calculation shows that up to a constant sign (see Appendix-\ref{app:derivations})
\begin{equation}\label{eq:mu}
    \star d\mu = \sum_{i=1}^n \brac{\sum_{j=1}^n \frac{\partial \mu_{ij}}{\partial x_j}} dx_i.
\end{equation}
This formula is suggestive of a simple matrix formulation: If we let $A:\Real^n\too\Real^{n\times n}$ be the anti-symmetric matrix-valued function where $A_{ij}=\mu_{ij}$ then the divergence-free vector field $v=\star d\mu$ can be written as taking row-wise divergence of $A$, \ie, 
\begin{center}			% Centering minipage
\vspace{-1em}
\colorbox{mygray} {		% Set's the color of minipage
\begin{minipage}{0.986\linewidth} 	% Starts minipage
\begin{equation}\label{eq:div_A}
    v = \begin{pmatrix} \div (A_1) \\ \vdots \\ \div(A_n) \end{pmatrix}.
\end{equation}
\end{minipage}}			% End minipage
\end{center}
However, this requires parameterizing $O(n^2)$ functions. A more compact representation, which starts from a vector field, can also be derived.
The idea behind this second construction is to model $\mu$ instead as $\mu=\delta \nu$, where $\nu\in\gA^{n-1}(\Real^n)$. Putting this together with \eqref{e:star_d_mu} we get that 
\begin{equation}\label{e:star_d_delta_nu}
    v = \star d \delta \nu
\end{equation}
is a divergence-free vector field. 
To provide \eqref{e:star_d_delta_nu} in matrix formulation we first write $\nu\in\gA^{n-1}(\Real^n)$ in the ($n$-$1$)-form basis, \ie, $\nu = \sum_{i=1}^n \nu_i \star dx_i$. Then a direct calculation provides up to a constant sign
\begin{equation}
    \delta \nu  = \frac{1}{2}\sum_{i,j=1}^n \brac{\frac{\partial \nu_i}{\partial x_j} - \frac{\partial \nu_j}{\partial x_i} }\star(dx_i\wedge dx_j) \label{eq:delta_nu}
\end{equation}
% \sout{$
%     \delta \nu  = \sum_{i<j} \brac{\frac{\partial \nu_i}{\partial x_j} - \frac{\partial \nu_j}{\partial x_i} }\star(dx_i\wedge dx_j) $} \yaron{comment: also changed here to sum over all $i,j$ to be consistent with above and below.}
So, given an arbitrary vector field $b: \Real^n\too \Real^n$, and denoting $J_b$ as the Jacobian of $b$, 
we can construct $A$ as
\begin{center}			% Centering minipage
\vspace{-1em}
\colorbox{mygray} {		% Set's the color of minipage
\begin{minipage}{0.986\linewidth} 	% Starts minipage
\begin{equation}\label{e:J_JT}
    A = J_b - J_b\tran.
\end{equation}
\end{minipage}}			% End minipage
\end{center}
where $J_b$ denotes the Jacobian of $b$.
%By using automatic differentiation and recent advances in composable function transforms, we can compute this expression and then its row-wise divergence (\eqref{eq:div_A}).

To summarize, we have two constructions for divergence-free vector fields $v$: \begin{itemize}[leftmargin=70pt]
    \item[\textbf{Matrix-field}:] (equations \ref{e:star_d_mu} and \ref{eq:div_A}) $v$ is represented using an anti-symmetric matrix field $A$. 
    \item[\textbf{Vector-field}:] (equations \ref{e:star_d_delta_nu} and \ref{eq:div_A}+\ref{e:J_JT}) $v$ is represented using a vector field $b$.
\end{itemize}

\begin{wrapfigure}[9]{r}{0.3\linewidth}
\vspace{-1.4em}
\centering
\includegraphics[width=\linewidth]{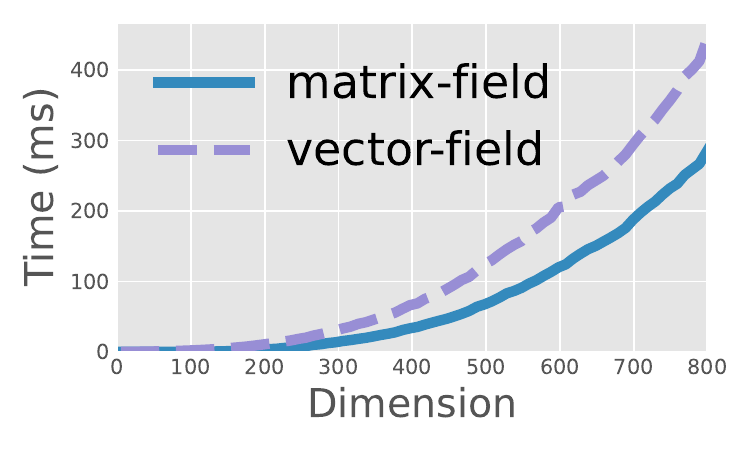}
\vspace{-1.6em}
\caption{Compute times.}
\label{fig:compute}
\end{wrapfigure}
As we show in the next section, these two approaches are maximally expressive (\ie, universal), meaning they can approximate arbitrary smooth divergence-free vector fields. However, empirically these two constructions can exhibit different practical trade-offs. The matrix-field construction has a computational advantage as it requires one less Jacobian computation, though it requires more components---$O(n^2)$ vs.~$O(n)$---to represent than the vector-field construction. This generally isn't a concern as all components can be computed in parallel. However, the vector-field construction can make it easy to bake in additional constraints; an example of this is discussed in \Cref{sec:subharmonic}, where a non-negativity constraint is imposed for modeling continuous-time probability density functions.
In \Cref{fig:compute} we show wallclock times for evaluating the divergence-free vector field based on our two constructions. Both exhibit quadratic scaling (in function evaluations) with the number of dimensions due to the row-wise divergence in \eqref{eq:div_A}, while the vector-field construction has an extra Jacobian computation.

\subsection{Universality}
Both the matrix-field and vector-field representations are universal, \ie they can model arbitrary divergence-free vector fields. The main tool in the proof is the Hodge decomposition theorem \citep{morita2001geometry,berger2003panoramic}. For simplicity we will be working on a periodic piece of $\Real^n$, namely the torus $\sT^n=[-M,M]^n/\sim$ where $\sim$ means identifying opposite edges of the $n$-dimensional cube, and $M>0$ is arbitrary. 
Vector fields with compact support can always be encapsulated in $\sT^n$ with $M>0$ sufficiently large. As $\sT^n$ is locally identical to $\Real^n$, all the previous definitions and constructions hold. 

\begin{restatable}{theorem}{universalitythm}\label{thm:universal}
The matrix and vector-field representations are universal in $\sT$, possibly only missing a constant vector field. 
\end{restatable}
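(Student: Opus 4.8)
The plan is to prove exact representability first and treat the network approximation as a standard afterthought: I will show that every smooth divergence-free $v$ on $\sT^n$ equals $\star d\mu$ (matrix-field) and $\star d\delta\nu$ (vector-field) for smooth $\mu,\nu$, up to an additive constant vector field. The single engine behind both is the Hodge decomposition theorem, which on the compact manifold $\sT^n$ writes any smooth $k$-form $\omega$ uniquely as $\omega = d\alpha + \delta\beta + h$, with $h$ harmonic ($dh=\delta h=0$), the three summands mutually $L^2$-orthogonal, and $\alpha,\beta,h$ all smooth by elliptic regularity of the Hodge Laplacian.

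First I would translate the hypothesis. Since $\div v = d\star v$, the field $v$ is divergence-free exactly when the $(n{-}1)$-form $\omega := \star v$ is closed, i.e. $d\omega = 0$. Applying Hodge to $\omega$ gives $\omega = d\alpha + \delta\beta + h$ with $\alpha \in \gA^{n-2}(\sT^n)$ and $\beta \in \gA^{n}(\sT^n)$. Pairing against the co-exact part kills it: $\norm{\delta\beta}^2 = \langle \delta\beta,\omega\rangle = \langle \beta, d\omega\rangle = 0$, where the first equality uses orthogonality of the exact and harmonic summands to $\delta\beta$, the second is adjointness of $d$ and $\delta$, and the third is closedness. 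Hence $\omega = d\alpha + h$, and applying $\star$ (an involution up to sign) yields $v = \pm\star d\alpha \pm \star h$. Setting $\mu = \pm\alpha$ recovers the form in \eqref{e:star_d_mu}, so the only defect is $\star h$. On the flat torus the harmonic $(n{-}1)$-forms are precisely those with constant coefficients, spanned by the $\star dx_i$; therefore $\star h$ is a $1$-form with constant coefficients, that is, exactly a constant vector field. This settles the matrix-field claim.

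For the vector-field representation I would observe that $v = \star d\mu$ depends on $\mu$ only through $d\mu$, and then Hodge-decompose $\mu = \alpha$ itself as $\alpha = d\gamma + \delta\nu + h'$ with $\nu \in \gA^{n-1}(\sT^n)$. Since $dd\gamma = 0$ and $dh' = 0$, we get $d\alpha = d\delta\nu$, hence $\star d\mu = \star d\delta\nu$, which is precisely \eqref{e:star_d_delta_nu}; here $\nu$, equivalently the potential $b$, is smooth. Passing from exact representation to genuine universality is then routine: approximate the smooth potential $\mu$ (resp. $b$) in $C^1$ by a network via standard universal approximation, and note that $\mu \mapsto \star d\mu$ is a bounded first-order operator, so $C^1$-closeness of potentials forces $C^0$-closeness of the resulting fields.

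The hard part, requiring the most care, is the precise identification of the harmonic summand and the verification that the constant vector field is genuinely unavoidable rather than a slack in the argument. For the latter I would note that $\star d\mu = \pm\,\delta\star\mu$ is always co-exact, hence $L^2$-orthogonal to the harmonic (constant-coefficient) $1$-forms, so no choice of $\mu$ or $\nu$ can produce a nonzero average flow; the cohomology $H^{n-1}(\sT^n)\cong\R^n$ measures exactly this missing constant. Secondary bookkeeping is needed for the sign conventions of $\star$, $\delta$, and $\div = d\star$, and for low-dimensional degeneracies (e.g. the absence of the $d\gamma$ term when $n=2$), none of which affect the conclusion.
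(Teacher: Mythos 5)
Your proof is correct and follows essentially the same route as the paper's: Hodge-decompose $\star v$ to get the matrix-field representation up to a harmonic (constant) part, then Hodge-decompose $\mu$ itself and use $d\mu = d\delta\nu$ to pass to the vector-field representation. The only deviation is a micro-step---where the paper applies $d$ to both sides and concludes that $\delta\tau$ is harmonic (hence constant), you show via adjointness and $L^2$-orthogonality that the co-exact component vanishes outright, a slightly cleaner variant---and your added remarks on $C^1$ approximation of potentials and on the genuine unavoidability of the constant field are sound extras beyond what the paper's proof addresses.
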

A formal proof of this result is in \Cref{app:universality}.

\section{Neural Conservation Laws}
\label{sec:conservation-laws}

% We next discuss a key aspect of our work, on parameterizing solutions of scalar conservation laws, which we show is an immediate consequence of having divergence-free vector fields.
We now discuss a key aspect of our work, which is parameterizing solutions of scalar conservation laws. Conservation laws are a focal point of mathematical physics and have seen applications in machine learning, with the most well known examples being conserved scalar quantities often referred to as density, energy, or mass. 
Formally, a conservation law can be expressed as a first-order PDE written in  divergence form as $\tfrac{\partial \rho}{\partial t} + \div(j) = 0$,
where $j$ is known as the flux, and the divergence is taken over spatial variables. In the case of the continuity equation, there is a velocity field $u$ which describes the flow and the flux is equal to the density times the velocity field:
\begin{equation}\label{eq:conservation_law}
\frac{\partial \rho}{\partial t} + \div(\rho u) = 0
\end{equation}
where $\rho : \Real^n\too \Real^+$ and $u: \Real^n\too \Real^n$.
One can then interpret the equation to mean that $u$ transports $\rho(0,\cdot)$ to $\rho(t,\cdot)$ continuously -- without teleporting, creating or destroying mass.
Such an interpretation plays a key role in physics simulations as well as the dynamic formulation of optimal transport \citep{benamou2000computational}. 
In machine learning, the continuity equation appears in continuous normalizing flows \citep{chen2018neural}---which also have been used to approximate solutions to dynamical optimal transport. \citep{finlay2020train,tong2020trajectorynet,onken2021ot} These, however, only model the velocity $u$ and rely on numerical simulation to solve for the density $\rho$, which can be costly and time-consuming.

Instead, we observe that \eqref{eq:conservation_law} can be expressed as a divergence-free vector field by augmenting the spatial dimensions with the time variable, resulting in a vector field $v$ that takes as input $(t, x)$ and outputs $(\rho, \rho u)$. Then \eqref{eq:conservation_law} is equivalent to
\begin{equation}
    \div(v) = \div \begin{pmatrix} \rho \\ \rho u \end{pmatrix} = \frac{\partial \rho}{\partial t} + \div(\rho u) = 0
\end{equation}
where the divergence operator is now taken with respect to the joint system $(t, x)$, \ie $\tfrac{\partial}{\partial t} + \sum_{i=1}^n \tfrac{\partial}{\partial x_i}$.
We thus propose modeling solutions of conservation laws by parameterizing the divergence-free vector field $v$.
%This allows us to \textbf{enforce the continuity equation at an architecture level}. 
Specifically, we parameterize a divergence-free vector field $v$ and set $v_1 = \rho$ and $v_{2:n+1} = \rho u$, allowing us to recover the vector field as $u = \tfrac{v_{2:n+1}}{\rho}$, assuming $\rho \neq 0$.
This allows us to enforce the continuity equation at an architecture level. Compared to simulation-based modeling approaches, we completely forego such computationally expensive simulation procedures. Code for our experiments are available at \url{https://github.com/facebookresearch/neural-conservation-law}.

\section{Related Works}

\paragraph{Baking in constraints in deep learning}
Existing approaches to enforcing constraints in deep neural networks can induce constraints on the derivatives, such as convexity~\citep{amos2017input} or Lipschitz continuity~\citep{miyato2018spectral}. More complicated formulations involve solving numerical problems such as using solutions of convex optimization problems~\citep{amos2017optnet}, solutions of fixed-points iterations~\citep{bai2019deep}, and solutions of ordinary differential equations~\citep{chen2018neural}. These models can help provide more efficient approaches or alternative algorithms for handling downstream applications such as constructing flexible density models~\citep{chen2019residual,lu2021implicit} or approximating optimal transport paths~\citep{tong2020trajectorynet,makkuva2020optimal,onken2021ot}. However, in many cases, there is a need to solve a numerical problem in which the solution may only be approximated up to some numerical accuracy; for instance, the need to compute the density flowing through a vector field under the continuity equation~\citep{chen2018neural}.

\paragraph{Applications of differential forms} Differential forms and more generally, differential geometry, have been previously applied in manifold learning---see \eg \citet{arvanitidis_latent_2021} and \citet{bronstein2017geometric} for an in-depth overview. 
Most of the applications thus far have been restricted to 2 or 3 dimensions---either using identities like $\div \circ \text{curl}=0$ in 3D for fluid simulations \citep{rao2020physics}, or for learning geometric invariances in 2D images or 3D space \citep{Gerken2021invariance,li20213dmol}.

\paragraph{Conservation Laws in Machine Learning} \citep{sturm2022discreteconservation} previously explored discrete analogs of conservation laws by conserving mass via a balancing operation in the last layer of a neural network. \citep{muller2022noetherconservation} utilizes a wonderful application of Noether's theorem to model conservation laws by enforcing symmetries in a Lagrangian represented by a neural network. 

\section{Neural approximations to PDE solutions}
\label{sec:fluids}
As a demonstration of our method, we apply it to neural-based PDE simulations of fluid dynamics. 
% First, we apply it to a problem consisting of modelling idealized fluid flow on the flat Torus, then, to showcase the ability to handle boundaries, another in the open unit ball $\mathbb{B} \subseteq \R^3$
First, we apply it to modelling inviscid fluid flow in the open ball $\mathbb{B} \subseteq \R^3$ with free slip boundary conditions, then to a 2d example on the flat Torus $\mathbb{T}^2$, but with more complex initial conditions. 
While these are toy examples, they demonstrate the value of our method in comparison to existing approaches---namely that we can exactly satisfy the continuity equation and preserve exact mass.
%, and we can exactly satisfy the divergence-free constraint on manifolds.

\paragraph{The Euler equations of incompressible flow}
\label{sec:fluids:euler-intro}
The incompressible Euler equations \citep{feynman_1989} form an idealized model of inviscid fluid flow, governed by the system of partial differential equations\footnote{The convective derivative appearing in \eqref{eq:fluid_eqs}, $\nabla_u u (x) = \lim_{h \to 0} \tfrac{u(x + hu(x)) - u(x)}{h} = [Du](u)$
is also often written as $(\nabla \cdot u) u$.}
\begin{equation}\label{eq:fluid_eqs}
    \tfrac{\partial \rho}{\partial t} + \div(\rho u) = 0,
    \quad\quad\quad
    \tfrac{\partial u}{\partial t} + \nabla_u u = \tfrac{\nabla p}{\rho},
    \quad\quad
    \div(u) = 0
\end{equation}
in three unknowns: the fluid velocity $u(t,x) \in \R^3$, the pressure $p(t,x)$, and the fluid density $\rho(t,x)$. While the fluid velocity and density are usually given at $t=0$, the initial pressure is not required. Typically, on a bounded domain $\Omega \subseteq \R^n$, these are supplemented by the \textit{free-slip} boundary condition and initial conditions
\begin{gather} u \cdot n = 0 \text{ on } \partial \Omega   \qquad u(0,x) = u_0 \text{ and } \rho(0,x) = \rho_0 \qquad \text{ on } \Omega \label{eq:inc_euler_bc_init}\end{gather}
The density $\rho$ plays a critical role since in addition to being a conserved quantity, it influences the dynamics of the fluid evolution over time. In numerical simulations, satisfying the continuity equation as closely as possible is desirable since the equations in (\ref{eq:fluid_eqs}) are coupled. Error in the density feeds into error in the velocity and then back into the density over time.  In the finite element literature, a great deal of effort has been put towards developing conservative schemes that preserve mass (or energy in the more general compressible case)---see \citet{guermond_projection_2000} and the introduction of \citet{almgren_conservative_1998} for an overview. 
But since the application of physics informed neural networks (PINNs) to fluid problems is much newer, 
conservative constraints have only been incorporated as penalty terms into the loss \citep{mao_physics-informed_2020,jin_nsfnets_2021}. 

\subsection{Physics informed neural networks}
\label{sec:fluids:PINN}
Physics Informed Neural Networks (PINNs; \citet{raissi_physics-informed_2019,raissi_physics_2017}) have recently received renewed attention as an application of deep neural networks. While using neural networks as approximate solutions to PDE had been previously explored (e.g in \cite{lagaris1998artificial}), modern advances in automatic differentiation algorithms have made the application to much more complex problems feasible \citep{raissi_physics-informed_2019}. 
The ``physics'' in the name is derived from the incorporation of \textit{physical} terms into the loss function, which consist of adding the squared residual norm of a PDE. 
For example, to train a neural network $\phi = [\rho,p,u]$ to satisfy the Euler equations, the standard choice of loss to fit to is
\begin{alignat*}{4}
&L_{F} &&= \norm{u_t + \nabla_u u + \tfrac{\nabla p}{\rho}}_{\Omega}^2 
\quad &&L_{\div} = \norm{\div(u)}_{\Omega} 
\quad &&L_{I} = \norm{u(0,\cdot) - u_0(\cdot)}_{\Omega}^2 + \norm{\rho(0,\cdot) - \rho_0(\cdot) }_{\Omega}\\
&L_{\text{Cont}} &&= \norm{\tfrac{\partial \rho}{\partial t} + \div(\rho u) }^2_\Omega &&L_{G} = \norm{u\cdot n}_{\partial \Omega}^2 &&L_{\text{total}} = \gamma \cdot [L_F,L_I,L_{\div},L_\text{Cont}, L_{G}]
\end{alignat*}
where $\gamma=(\gamma_F,\gamma_I,\gamma_\div \gamma_{Cont}, \gamma_G)$ denotes suitable coefficients (hyperparameters). The loss term $L_G$ ensures fluid does not pass through boundaries, when they are present.
Similar approaches were taken in \citep{mao_physics-informed_2020} and \citep{jagtap_conservative_2020} for related equations. While schemes of this nature are very easy to implement, they have the drawback that since PDE terms are only penalized and not strictly enforced, one cannot make guarantees as to the properties of the solution. 
%However, their built-in adaptability recovers properties only found in exponentially more sophisticated Finite Element schemes \citep{arthurs_active_2021}, thus they remain an exciting avenue for exploration.

To showcase the ability of our method to model conservation laws, we will parameterize the density and vector field as $v = [\rho, \rho u]$, as detailed in \Cref{sec:conservation-laws}.
This means we can omit the term $L_\text{Cont}$ as described in \Cref{sec:fluids:PINN} from the training loss. 
The divergence penalty, $L_{\div}$ remains when modeling incompressible fluids, since $u$ is not necessarily itself divergence-free -- it is $v= [\rho,\rho u]$ which is divergence free. 
In order to stablize training, we can modify the loss terms
$L_F,L_G,L_I$ to avoid division by $\rho$. This is detailed in \Cref{app:stable_losses}.

\begin{figure}
    \centering
    \begin{subfigure}[b]{0.48\linewidth}
    \begin{subfigure}[b]{\linewidth}
    \includegraphics[width=\linewidth]{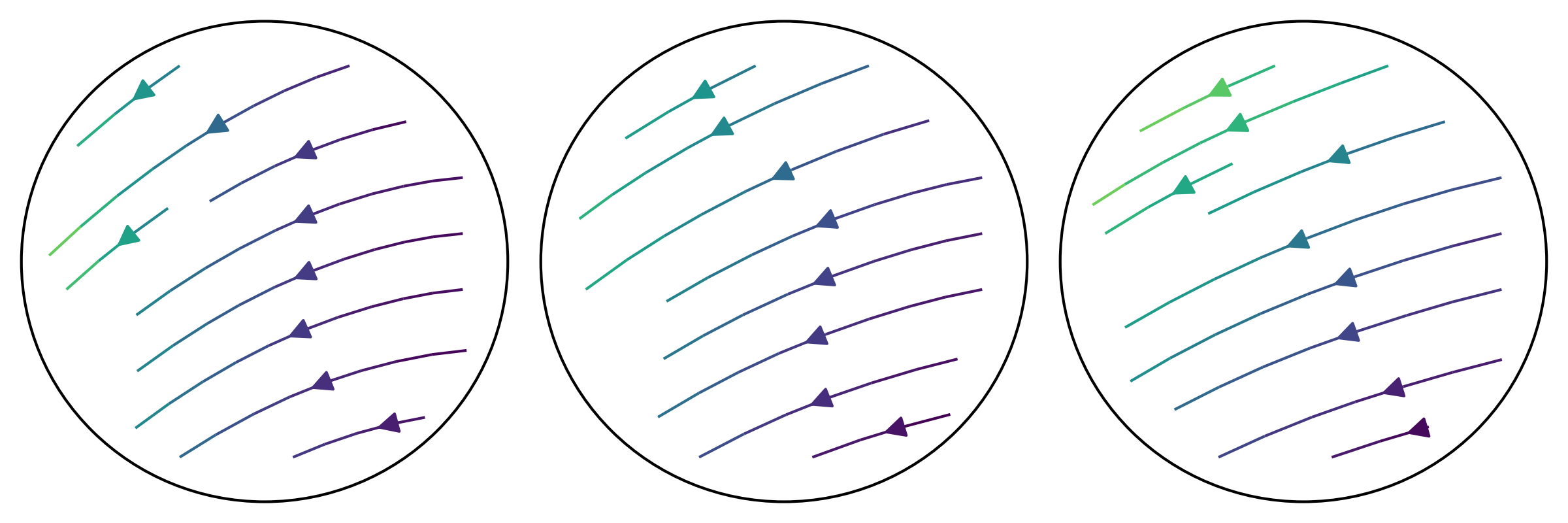}\\
    \includegraphics[width=\linewidth]{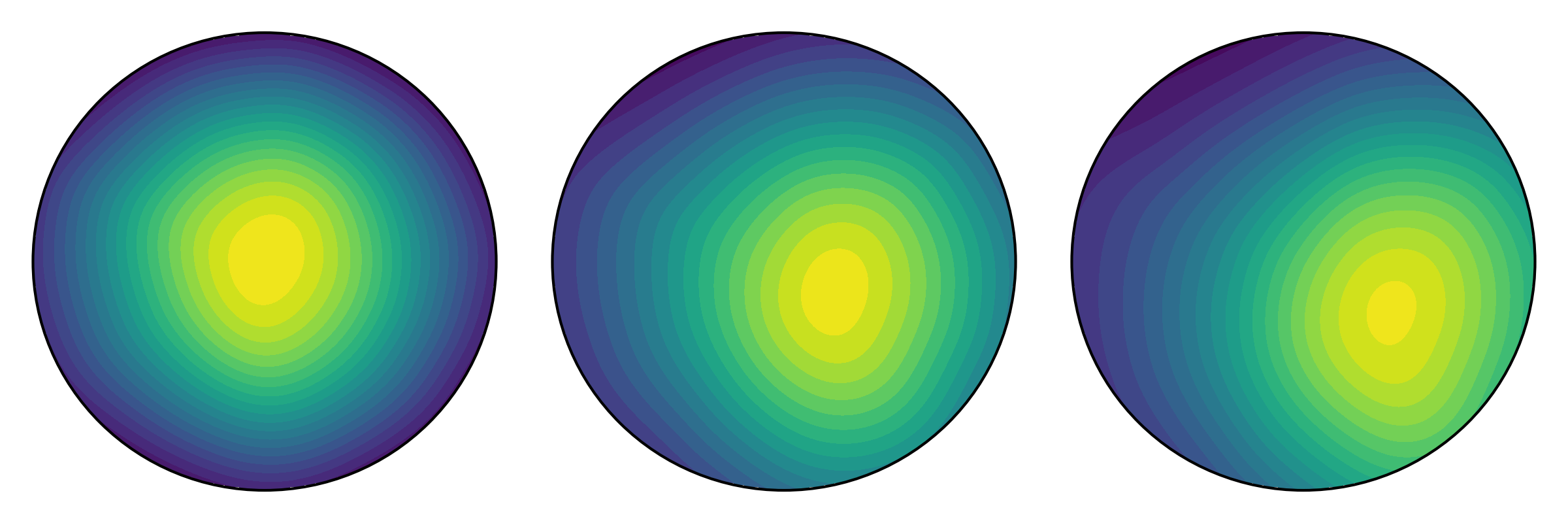}
    \begin{center}
        $t$ = 0.00 \quad\quad\;\;
        $t$ = 0.25 \quad\quad\;\;
        $t$ = 0.50
    \end{center}
    \subcaption{Incompressible baseline}
    \end{subfigure}
    \end{subfigure}
    \begin{subfigure}[b]{0.48\linewidth}
    \begin{subfigure}[b]{\linewidth}
    \includegraphics[width=\linewidth]{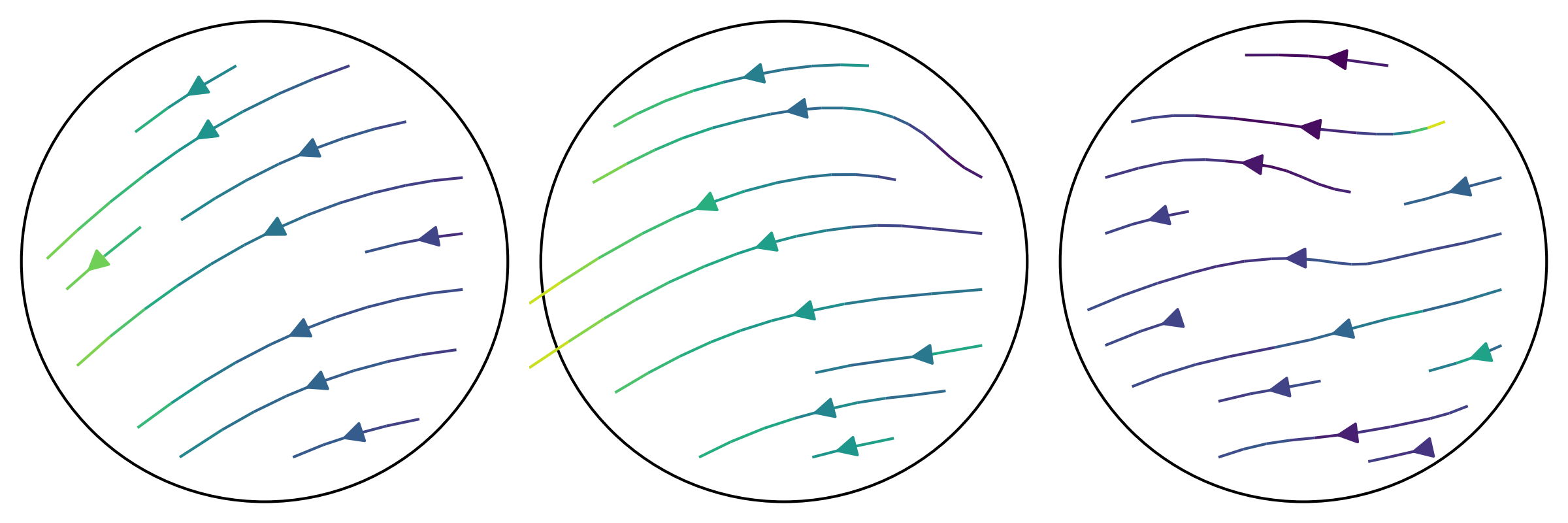}
    \includegraphics[width=\linewidth]{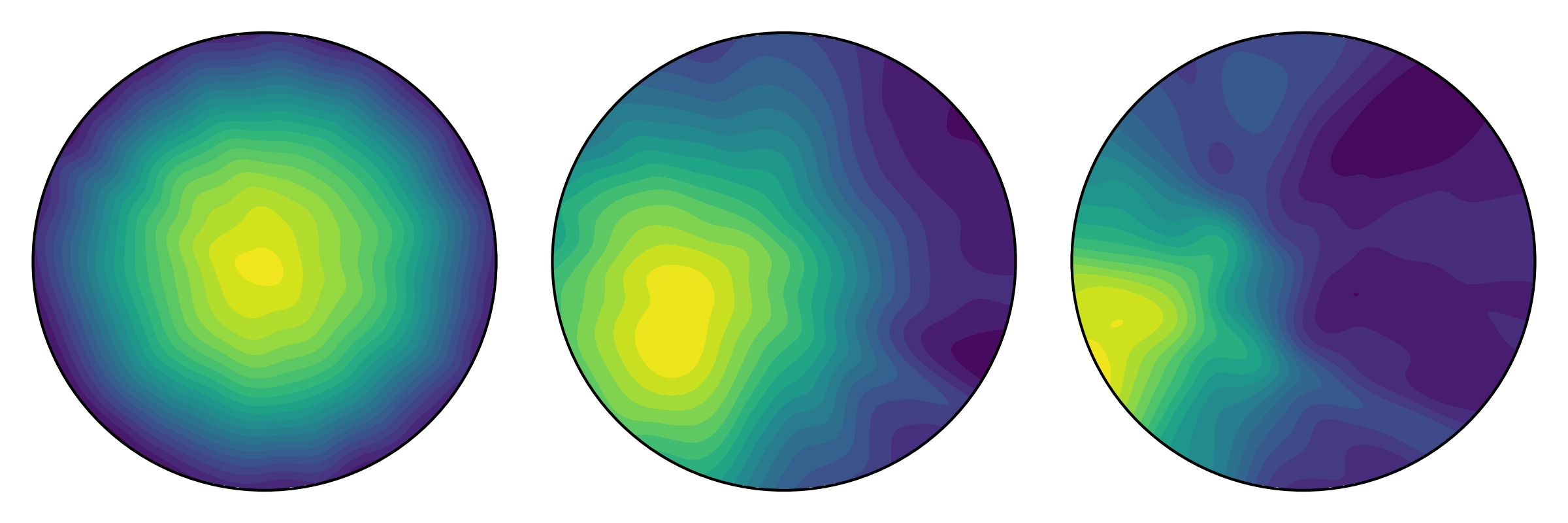}
    \begin{center}
        $t$ = 0.00 \quad\quad\;\;
        $t$ = 0.25 \quad\quad\;\;
        $t$ = 0.50
    \end{center}
    \subcaption{Neural Conservation Law (Ours)}
    \end{subfigure}
    \end{subfigure}
\caption{Exactly satisfying the continuity equation yields much better performance than only satisfying the incompressible (divergence-free) condition of the vector field, even though both approaches obtain very small loss values. 2D slices of the learned solution of $u$ and $\rho$ are shown.}
% 2D slices ($Z=0$) of evolution of solution to Euler equation with initial conditions in \ref{sec:fluids:3d-experiments}. Compared here is our Neural Conservation Laws formulation which exactly satisfies the continuity equation (\textit{left}), and a curl formulation that satisfies the divergence-free constraint (\textit{right}). Both are fit to the system as described in \ref{sec:fluids:3d-experiments}, but the curl formulation fails to model the flow of the density under the velocity.
\label{fig:euler_simulation_plot_3d}
\end{figure}

\subsection{Incompressible variable density inside the 3D unit ball}
\label{sec:fluids:3d-experiments}
We first construct a simple example within a bounded domain, specifically, we will consider the Euler equations inside $B(0,1) \subseteq \R^3$, with the initial conditions
\begin{equation} 
\rho(0, x) = 3/2 - \norm{x}^2 \qquad v(0, x) =( -2, x_0-1, 1/2)
\end{equation}
As a baseline, we compare against a parameterization that uses the divergence-free vector field only for the incompressible constraint. This is equivalent to the curl approach described in \cite{rao2020physics}; another neural network is used to parameterize $\rho$. In contrast, our generalized formulation of divergence-free fields allow us to model the continuity equation exactly, as in this case, $[\rho, \rho u]$ is 4 dimensional.
We parameterize our method using $v = [\rho, \rho u]$, as detailed in \Cref{sec:conservation-laws}, with the vector-field approach from \Cref{sec:model-spec}. 

%We refer to our method (1) as the \textit{conservation} approach and (2) similar to that from \cite{rao2020physics} as the \textit{curl} approach. 

The results for training both networks are shown in \Cref{fig:euler_simulation_plot_3d}. Surprisingly, even though both architectures signficantly improve the training loss after a comparable number of iterations (see \Cref{apx:details}), the curl approach fails to advect the density accurately, whereas our NCL approach evolves the density as it should given the initial conditions.

\subsection{Incompressible variable density flow on $\mathbb{T}^2$}
\label{sec:fluids:manifold-experiments}
As a more involved experiment, we model the Euler equations on the flat 2-Torus $\mathbb{T}^2$. This is parameterized by a multilayer perceptron pre-composed with the periodic embedding $\iota : [0,1]^2 \to \mathbb{T}^2$
\begin{gather*} \iota(x,y) = \left[  \cos(2\pi x ), \sin(2\pi x),\cos(2\pi y ),\sin(2\pi y) \right]
\end{gather*}
The continuity equation is enforced exactly, using the method detailed in \Cref{sec:conservation-laws}, so the $L_{\text{Cont}}$ term is omitted from $L_\text{total}$. The initial velocity and density are
\[ \rho_0(x,y) = (z_1 + z_3)^2 + 1 \qquad v_0(x,y) = [e^{z_3}, e^{z_1}/2] \]
where $z = \theta(x,y,1)$.
Here we used the matrix-field parameterization as referred to in \Cref{sec:model-spec}, for the simple reason that it performed better empirically.
We also include the harmonic component from Hodge decomposition, which is just a constant vector field, to ensure we have full universality.
Full methodology for the experiment is detailed in \Cref{apx:details}. 
The results of evolving the flow for time $t \in [0, \nicefrac{1}{3}]$ are shown in \Cref{fig:fluids:2d-tori-experiment}, as well as a comparison with a reference finite element solution generated by the conservative splitting method from \citep{guermond_projection_2000}. In general, our model fits the initial conditions well and results in fluid-like movement of the density, however small approximation errors can accumulate from simulating over a longer time horizon. To highlight this, we also plotted the best result we were able to achieve with a standard PINN. Although the PINN made progress minimizing the loss, it was unable to properly model the dynamics of the fluid evolution. While this failure can likely be corrected by tuning the $\gamma$ constant (see \Cref{sec:fluids:PINN}), we were unable to find a combination that fit the equation correctly. 
By contrast, our NCL model worked with the first configuration we tried.

% \begin{figure}
%     \centering
%     \begin{subfigure}[b]{0.48\linewidth}
%     \centering
%      \includegraphics[width=\linewidth]{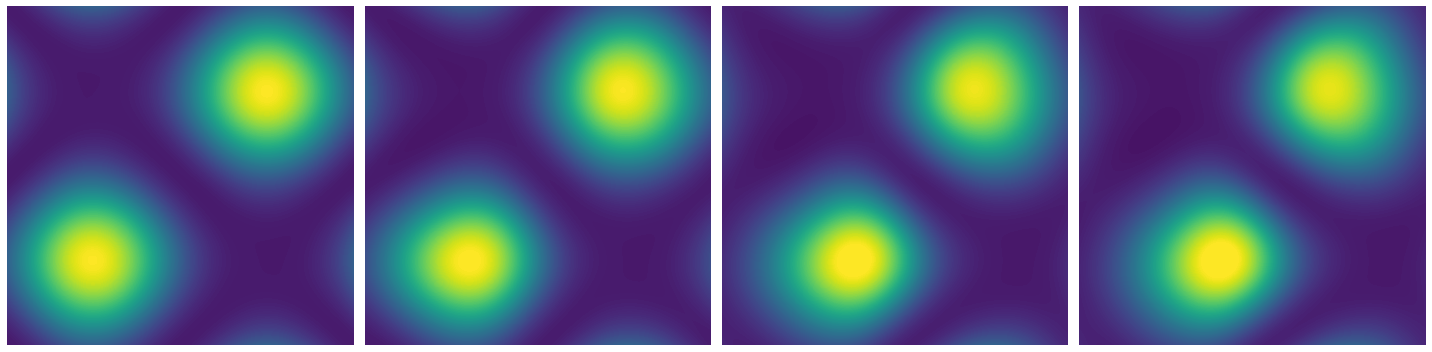}\\
%      \includegraphics[width=\linewidth]{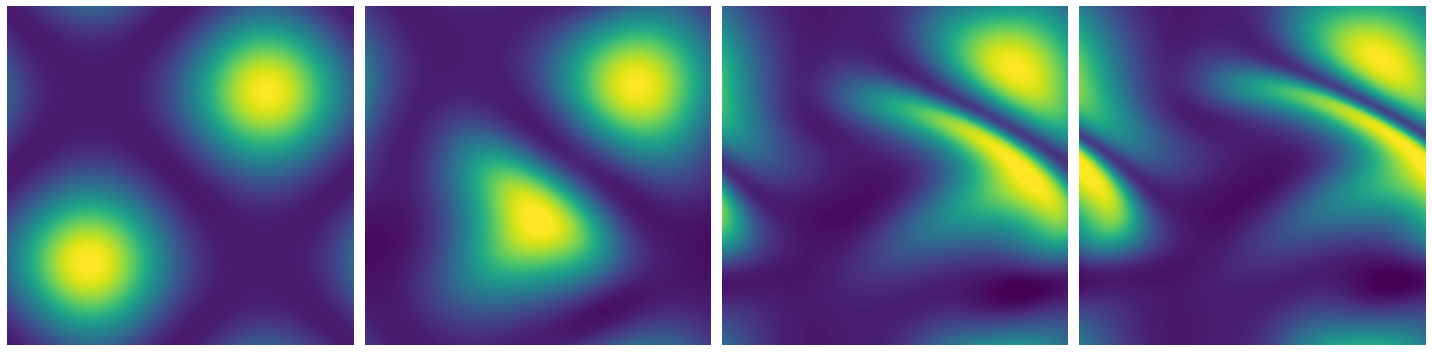}\\
%      \includegraphics[width=\linewidth]{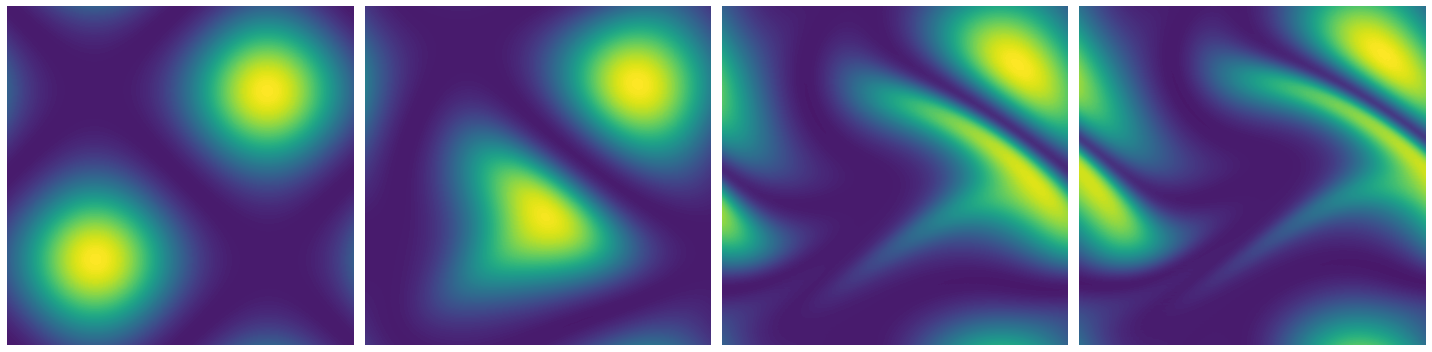}\\
%      \caption{Density}
%      \end{subfigure}
%      \begin{subfigure}[b]{0.48\linewidth}
%      \centering
%      \includegraphics[width=.7\linewidth]{}\\
%      \includegraphics[width=.7\linewidth]{}\\
%      %\includegraphics[width=\linewidth]{figs/2d-fluids/2dtori_vel_error.png}\\
%      \caption{Training loss (\textit{top}), L2 distance to ground truth (\textit{bottom})}
%      \end{subfigure}
%     \caption{A Physics Informed Neural Network (\textit{top left}), our method (\textit{middle left}) vs a reference FEM solution (\textit{bottom left}). While both models minimize the loss effectively and fit the initial conditions, the PINN fails to learn the dynamics of the advected density.}
%     \label{fig:fluids:2d-tori-experiment}
% \end{figure}

\begin{figure}
\begin{subfigure}[b]{0.55\linewidth}
\centering
\begin{subfigure}[b]{0.05\linewidth}
    \rotatebox[origin=c]{90}{PINN}\vspace{5mm}
\end{subfigure}%
\begin{subfigure}[b]{0.95\linewidth}
    \includegraphics[width=\linewidth]{figs/2d-fluids/pinn_tori_dens.png}
\end{subfigure}\\
\begin{subfigure}[b]{0.05\linewidth}
    \rotatebox[origin=c]{90}{NCL}\vspace{5.7mm}
\end{subfigure}%
\begin{subfigure}[b]{0.95\linewidth}
    \includegraphics[width=\linewidth]{figs/2d-fluids/ncl_tori_dens.png}
\end{subfigure}\\
\begin{subfigure}[b]{0.05\linewidth}
    \rotatebox[origin=c]{90}{FEM}\vspace{5.7mm}
\end{subfigure}%
\begin{subfigure}[b]{0.95\linewidth}
    \includegraphics[width=\linewidth]{figs/2d-fluids/fem_tori_dens.png}
\end{subfigure}\\
\hphantom{ttt} $t$=0.0 \hspace{2.3em} $t$=0.1 \hspace{2.3em} $t$=0.25 \hspace{2.3em} $t$=0.3
\caption{Learned density function}
\end{subfigure}%
\begin{minipage}[b]{0.45\linewidth}
\begin{subfigure}[b]{\linewidth}
\centering
\includegraphics[width=0.93\linewidth]{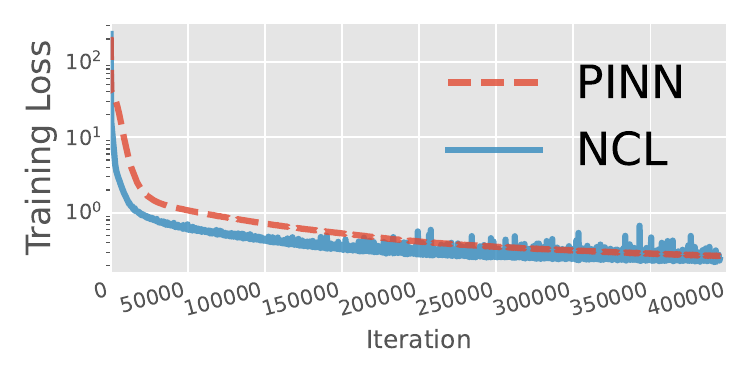}
\vspace{-0.6em}
\caption{Convergence of training loss}
\end{subfigure}\\
\begin{subfigure}[b]{\linewidth}
\centering
\includegraphics[width=0.93\linewidth]{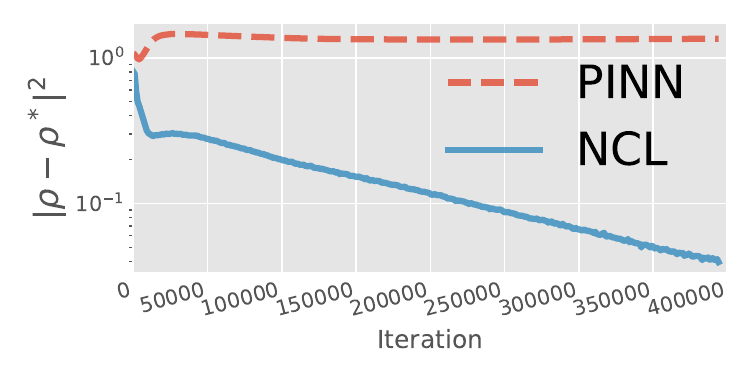}
\vspace{-0.6em}
\caption{Comparison to FEM solution}
\end{subfigure}
\end{minipage}
\caption{While both PINN and NCL models minimize the training loss effectively and fit the initial conditions, the PINN fails to learn the dynamics of the advected density. When compared to a gold standard FEM solution, our NCL model nicely exhibits linear convergence to the solution.}
\label{fig:fluids:2d-tori-experiment}
\end{figure}

\section{Learning the Hodge decomposition}

Using our divergence-free models, we can also learn the Hodge decomposition of a vector field itself. Informally, the Hodge decomposition states that any vector field can be expressed as the combination of a gradient field (\ie gradient of a scalar function) and a divergence-free field. 
There are many scenarios where a neural network modelling a vector field should be a gradient field (theoretically), such as score matching~\citep{song2019generative}, Regularization by Denoising~\citep{hurault2021gradient}, amortized optimization \citep{xue20a}, and meta-learned optimization~\citep{andrychowicz2016learning}.
Many works, however, have reported that parameterizing such a field as the gradient of a potential hinders performance, and as such they opt to use an unconstrained vector field. Learning the Hodge decomposition can then be used to remove any extra divergence-free, \ie rotational, components of the unconstrained model.
%retrieving the gradient field portion of an unconstrained vector field.

As a demonstration of the ability of our model to scale in dimension, we construct a synthetic experiment where we have access to the ground truth Hodge decomposition. 
This provides a benchmark to validate the accuracy of our method.
To do so, we construct the target field $\hat v$ as
\begin{equation}
    \hat v = \nabla w + \eta 
\end{equation}
where $\eta$ is a fixed divergence-free neural network, and $w$ is a fixed scalar network. We use the same embedding from \Cref{sec:fluids:manifold-experiments}, but extended to $\mathbb{T}^n$, for $n \in \{25, 50, 100\}$. We then parameterize another divergence-free model $v_\theta$ which has a different architecture than the target $\eta$. Inspired by the Hodge decomposition, we propose using the following objective for training:
\begin{equation} \label{eq:hodge_loss}
\ell(\theta) = \Vert J_{\hat v - v_\theta} - [J_{\hat v - v_\theta}]\tran \Vert_F
\end{equation}
where $J_{\hat v - v_\theta}$ denotes the Jacobian of $\hat v - v_\theta$.
In the notation of differential forms, this corresponds to minimizing $\Vert \delta(\hat v- v_\theta) \Vert_F$, which when zero implies
$v_\theta = \hat v + \delta \mu$, where $\mu$ is an arbitrary $n$-form. However, since $v_\theta$ is divergence-free, it must hold $\delta \mu = - \nabla w$,
\ie the left-over portion of $\hat{v} - v_\theta$ is exactly $\nabla w$, because the Hodge decomposition is an orthogonal direct sum \citep{warner1983foundations}.

\begin{wrapfigure}[8]{r}{0.35\linewidth}
\vspace{-1.5em}
\centering
\includegraphics[width=\linewidth]{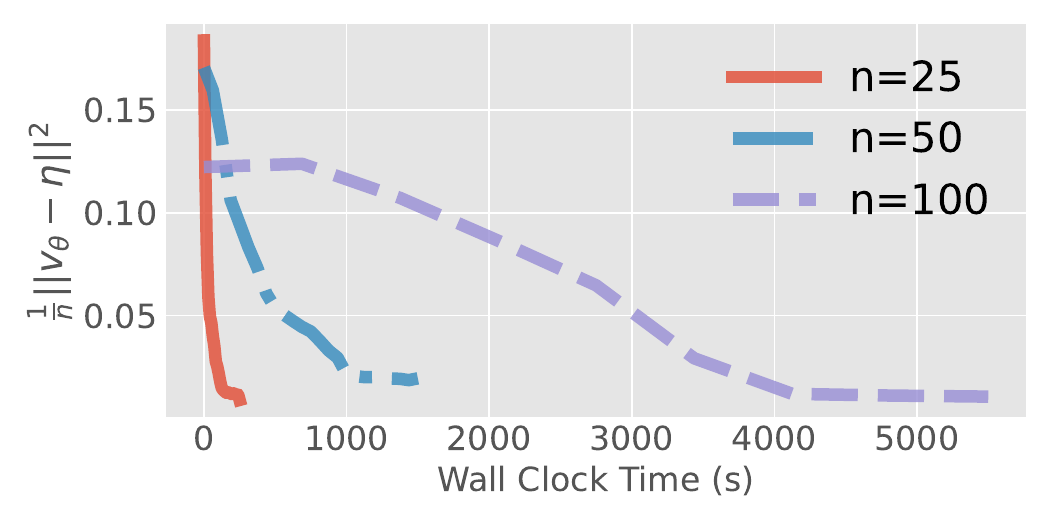}
\vspace{-1.5em}
\caption{$L_2$ to ground truth.}
\label{fig:hodge}
\end{wrapfigure}
For evaluation, we plot the squared $L_2$ norm to the ground truth $\eta$ against wall clock time. 
%While these dimensions are not high by machine learning standards, they are significantly beyond those achievable using grid based discretizations of divergence-free fields. 
While this demonstration is admittedly artificial, it serves to show that our model scales efficiently to moderately high dimensions -- significantly beyond those achievable using grid based discretizations of divergence-free fields (see \cite{Bhatia2013survey} for an overview of such methods). 
The results are shown in \Cref{fig:hodge}. 
While larger dimensions are more costly, we are still able to recover the ground truth in dimensions up to 100.

\section{Dynamical optimal transport}

The calculation of optimal transport (OT) maps between two densities of interest $p_0$ and $p_1$ serves as a powerful tool for computing distances between distributions. 
Moreover, the inclusion of OT into machine learning has inspired multiple works on building efficient or stable generative models~\citep{arjovsky2017wasserstein,tong2020trajectorynet,makkuva2020optimal,huang2020convex,finlay2020train,rout2021generative,rozen2021moser}. In particular, we are interested in the dynamical formulation of \citet{benamou2000computational}, which is a specific instance where the transport map is characterized through a vector field. Specifically, we wish to solve the optimization problem
\begin{equation}\label{eq:dynamical_ot}
    \min_{\rho, u}\;\; \int_0^1 \int_\gM \|u(t, x)\|^2 \rho(t, x) \;dx dt
\end{equation}
subject to the terminal constraints, $\rho(0, \cdot) = p_0(\cdot)$ and $\rho(1, \cdot) = p_1(\cdot)$, as well as remaining non-negative $\rho \geq 0$ on the domain and satisfying the continuity equation $\tfrac{\partial\rho}{\partial t} + \div(\rho u) = 0$. 

In many existing cases, the models for estimating either $\rho$ and \eqref{eq:dynamical_ot} are completely decoupled from the generative model or transport map~\citep{arjovsky2017wasserstein,makkuva2020optimal}, or in some cases $\rho$ would be computed through numerical simulation and \eqref{eq:dynamical_ot} estimated through samples~\citep{chen2018neural,finlay2020train,tong2020trajectorynet}. 
These approaches may not be ideal as they require an auxiliary process to satisfy or approximate the continuity equation.

In contrast, our NCL approach allows us to have access to a pair of $\rho$ and $u$ that always satisfies the continuity equation, without the need for any numerical simulation. 
As such, we only need to match the terminal conditions, and will automatically have access to a vector field $u$ that transports mass from $\rho(0, \cdot)$ to $\rho(1, \cdot)$. 
In order to find an optimal vector field, we can then optimize $u$ with respect to the objective in \eqref{eq:dynamical_ot}.

\subsection{Parameterizing non-negative densities with subharmonic functions} 
\label{sec:subharmonic}

In this setting, it is important that our density is non-negative for the entire domain and within the time interval between 0 and 1. Instead of adding another loss function to satisfy this condition, we show how we can easily bake this condition into our model through the vector-field parameterization.

Recall for the vector-field parameterization of the continuity equation, we have a vector field $b_\theta(y)$ with $y = [t, x]$, for $t \in [0, 1]$, $x \in \R^n$. This results in the parameterization 
\begin{equation}
    \rho = \div \left( \tfrac{\partial b_1}{\partial x} - \tfrac{\partial b_2:n+1}{\partial t}\tran{} \right) \quad \text{ for } i=1, \dots, n+1.
\end{equation}
We now make the adjustment that $\tfrac{\partial b_{2:n+1}}{\partial t}$ = $0$. 
This condition does not reduce the expressivity of the formulation, as $\rho$ can still represent any density function.
Interestingly, this modification does imply that $\rho = \nabla^2 b_1$ with $\nabla^2$ being the Laplacian operator, so that satisfying the boundary conditions is equivalent to solving a time-dependent Poisson's equation $\nabla^2 b_1(t, \cdot) = p_t$. 

Furthermore, the class of \emph{subharmonic} functions is exactly the set of functions $g$ that satisfies $\nabla^2 g(0, \cdot) \geq 0$. In one dimension, this corresponds to the class of convex functions but is a strict generalization of convex functions in higher dimensions. With this in mind, we propose using the parameterization
\begin{equation}\label{eq:subharmonic}
    b_1(t, x) = \sum_{k=1}^K w_k(t) \phi(a_k(t) x + b_k(t)) \quad \text{ where } \phi(z) = \tfrac{1}{4\pi} \left( \log(\|z \|^2) + E_1( \nicefrac{\|z\|^2}{4}) \right)
\end{equation}
with $E_1$ being the exponential integral function, and $w_k, a_k \in \R^+$ and $b_k \in \R^n$ are free functions of $t$. This a weighted sum of generalized linear models, where the choice of nonlinear function $\phi$ is chosen because it is the exact solution to the Poisson's equation for a standard normal distribution when $n = 2$, \ie $\nabla^2 \phi = \mathcal{N}(0, I)$, while for $n \geq 2$ this remains a subharmonic function. As such, this can be seen as a generalization of a mixture model, which are often referenced to be universal density approximators, albeit requiring many mixture components \citep{goodfellow2016deep}. 
%In practice, we find $K$=$128$ can be sufficient for approximating a variety of distributions as well as their optimal transport interpolations.

\begin{figure}
    \centering
    \begin{subfigure}[b]{0.49\linewidth}
         \centering
         \includegraphics[width=\linewidth]{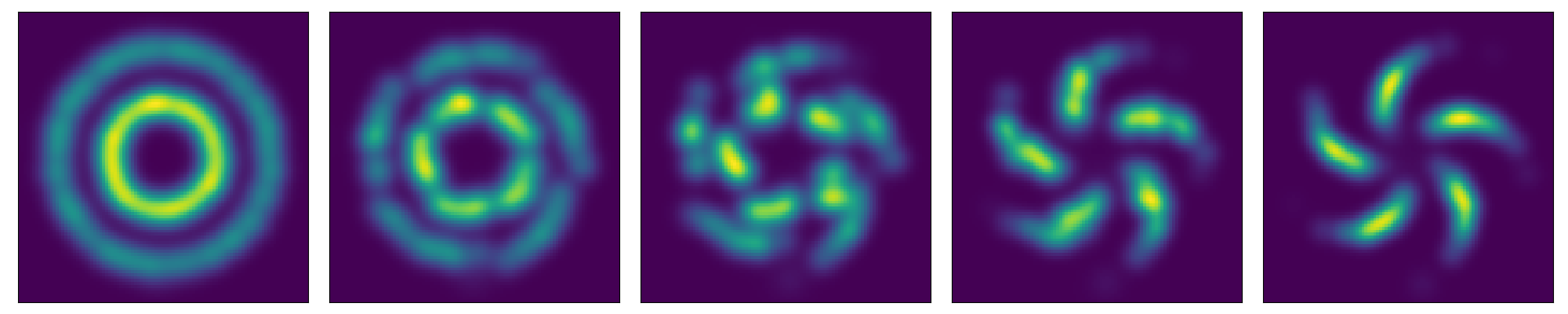}\\
         \includegraphics[width=\linewidth]{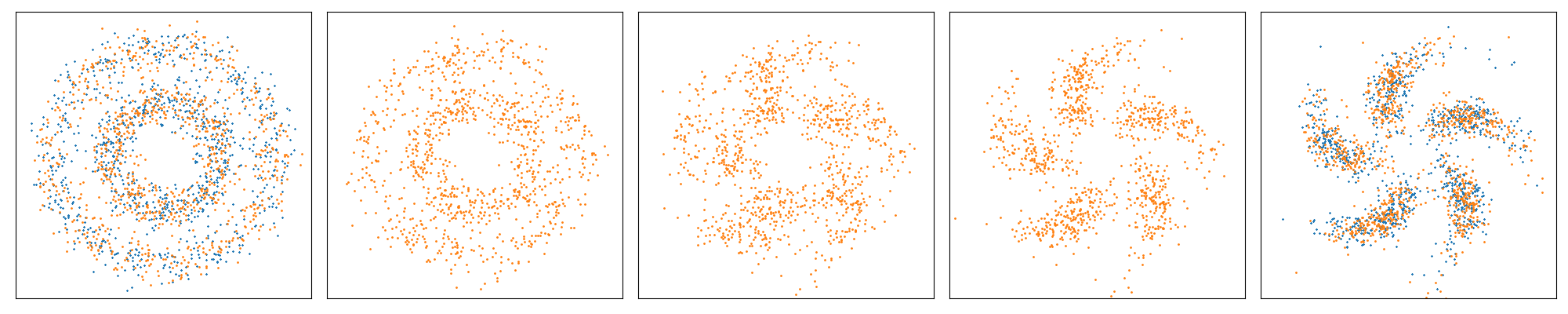}\\
         \includegraphics[width=\linewidth]{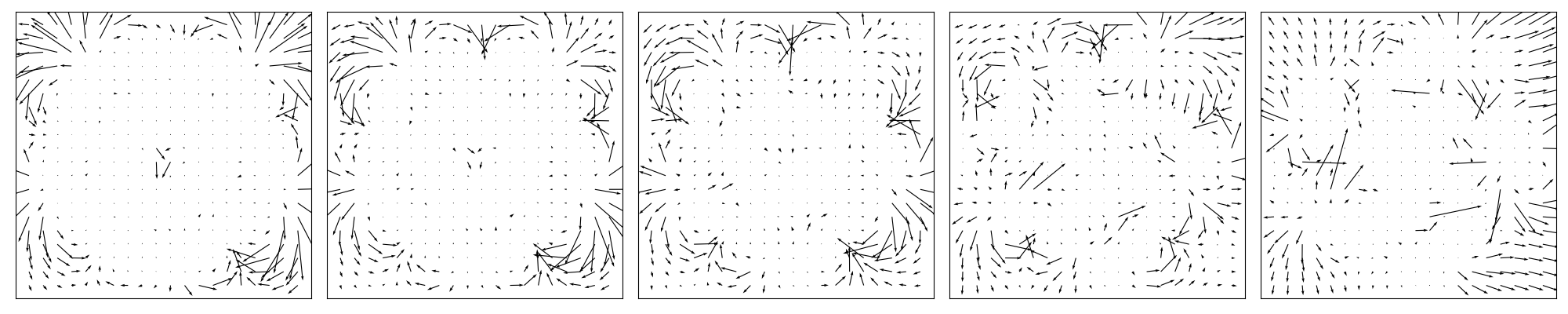}
         \caption{Circles $\longleftrightarrow$ Pinwheel}
     \end{subfigure}
     \begin{subfigure}[b]{0.49\linewidth}
         \centering
         \includegraphics[width=\linewidth]{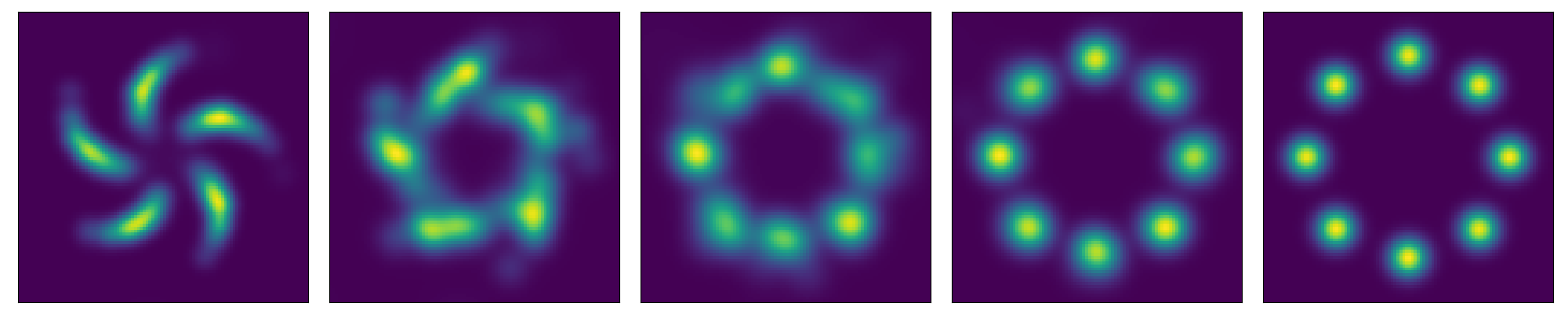}\\
         \includegraphics[width=\linewidth]{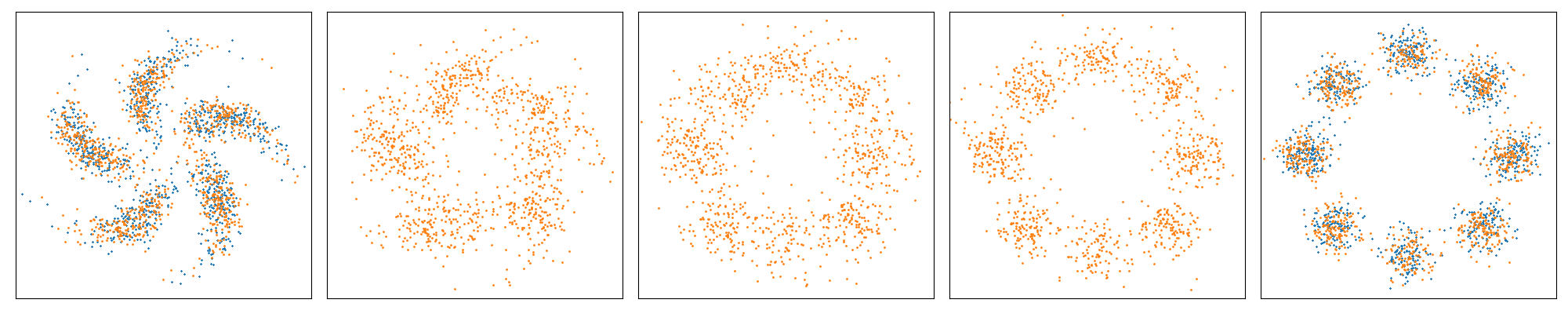}\\
         \includegraphics[width=\linewidth]{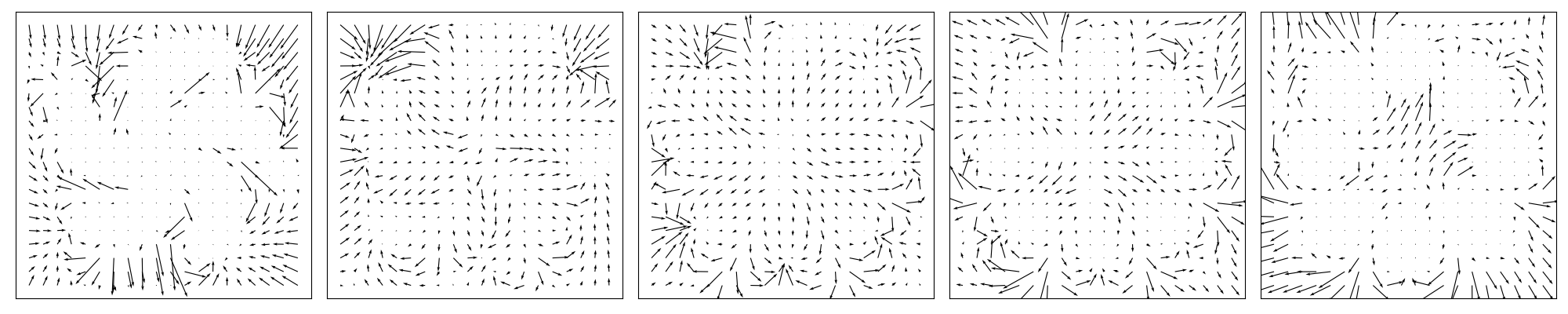}
         \caption{Pinwheel $\longleftrightarrow$ 8Gaussians}
     \end{subfigure}
    \caption{Learned approximation to the dynamical optimal transport map. (\textit{top}) Density $\rho$. (\textit{mid}) Transformed samples in orange, with samples from $p_0$ and $p_1$ in blue. (\textit{bottom}) Learned vector field.}
    \label{fig:ot_paths}
\end{figure}

\subsection{Experiments}

We experiment with pairs of 2D densities (visualized in \Cref{fig:ot_paths}) and fit a pair of $(\rho, u)$ to \eqref{eq:dynamical_ot} while satisfying boundary conditions. Specifically, we train with the loss
\begin{equation}
    \min_{\rho, u} \;
    \lambda \E_{x \sim \tilde{p}_0} \left[\left| \rho(0, x) - p_0(x) \right|\right]
    + \lambda \E_{x \sim \tilde{p}_1} \left[\left| \rho(1, x) - p_1(x) \right|\right] 
    + \int_0^1 \int_\gM \|u(t, x)\|^2 \rho(t, x) \;dx dt
\end{equation}
where $\tilde{p}_i$ is a mixture between $p_i$ and a uniform density over a sufficiently large area, for $i=0,1$, and $\lambda$ is a hyperparameter. We use a batch size of 256 and set $K$=128 (from \eqref{eq:subharmonic}). Qualitative results are displayed in \Cref{fig:ot_paths}, which include interpolations between pairs of densities.

% \begin{wraptable}{r}{0.5\linewidth}
% \vspace{-1.0em}
% \caption{Estimated $W_2^2(\mu, \nu)$.}\label{tab:otdists}
% \vspace{-1.5em}
% \ra{1.2}
% \resizebox{\linewidth}{!}{%
% \small
% \begin{tabular}{@{} l c c c @{}}\\\toprule  
% \;\;\;\;\;\;\;\;\;\;\;\;\;\;$\mu \leftrightarrow \nu$ & Discrete OT & Minimax & NCL (Ours) \\\midrule
% \;\;\;\;\;\;Circles $\leftrightarrow$ Pinwheel
% & 0.074{\tiny$\pm$0.002} & 0.053{\tiny$\pm$0.019} & 0.075{\tiny$\pm$0.006} \\
% \;\;\;Pinwheel $\leftrightarrow$ 8Gaussians
% & 0.249{\tiny$\pm$0.002} & 0.228{\tiny$\pm$0.003} & 0.253{\tiny$\pm$0.006} \\
% 8Gaussians $\leftrightarrow$ Circles
% & 0.162{\tiny$\pm$0.003} & 0.144{\tiny$\pm$0.004} & 0.163{\tiny$\pm$0.026} \\
% \bottomrule
% \end{tabular}
% }
% \end{wraptable} 
% %
Furthermore, we can use the trained models to estimate the Wasserstein distance, by transforming samples from $p_0$ through the learned vector field to time $t$=1. We do this for 5000 samples and compare the estimated value to (i) one estimated by that of a neural network trained through a minimax optimization formulation \citep{makkuva2020optimal}, and (ii) one estimated by that of a discrete OT solver \citep{bonneel2011displacement} based on kernel density approximation and interfaced through the \texttt{pot} library \citep{flamary2021pot}.
% We estimate the squared Wasserstein distance multiple times and report the average and standard deviation in \Cref{tab:otdists}.

Estimated squared Wasserstein distance as a function of wallclock time are shown in \Cref{fig:w2_plots}.
We see that our estimated values roughly agree with the discrete OT algorithm.
However, the baseline minimax approach consistently underestimates the optimal transport distance. This is an issue with the minimax formulation not being able to cover all of the modes, and has been commonly observed in the literature (\eg 
\citet{salimans2016improved,che2016mode,srivastava2017veegan}). Moreover, in order to stabilize minimax optimization in practice, \citet{makkuva2020optimal} made use of careful optimization tuning such as reducing momentum \citep{radford2015unsupervised}. In contrast, our NCL approach is a simple optimization problem.
%and we were able to train with the default Adam hyperparameters~\citep{kingma2014adam} and a larger learning rate.

\begin{figure}
    \centering
    \begin{subfigure}[b]{0.33\linewidth}
    \includegraphics[width=\linewidth]{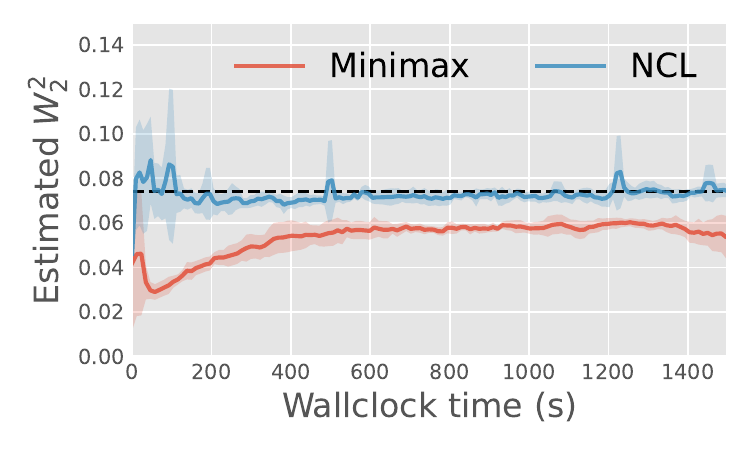}
    \vspace{-2em}
    \caption{Circles $\leftrightarrow$ Pinwheel}
    \end{subfigure}%
    \begin{subfigure}[b]{0.33\linewidth}
    \includegraphics[width=\linewidth]{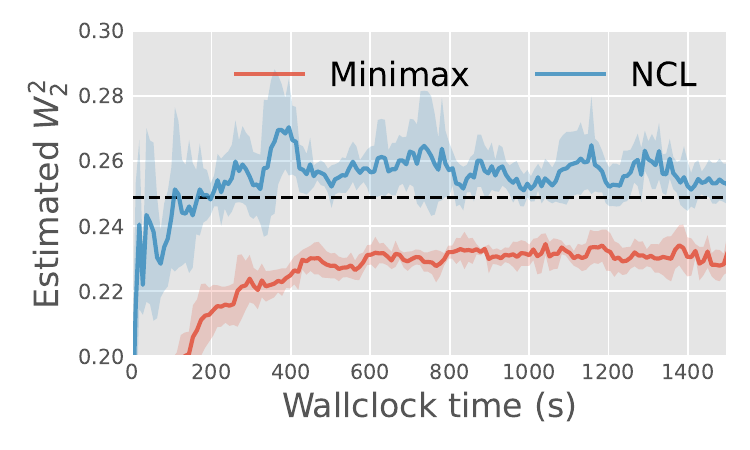}
    \vspace{-2em}
    \caption{Pinwheel $\leftrightarrow$ 8Gaussians}
    \end{subfigure}%
    \begin{subfigure}[b]{0.33\linewidth}
    \includegraphics[width=\linewidth]{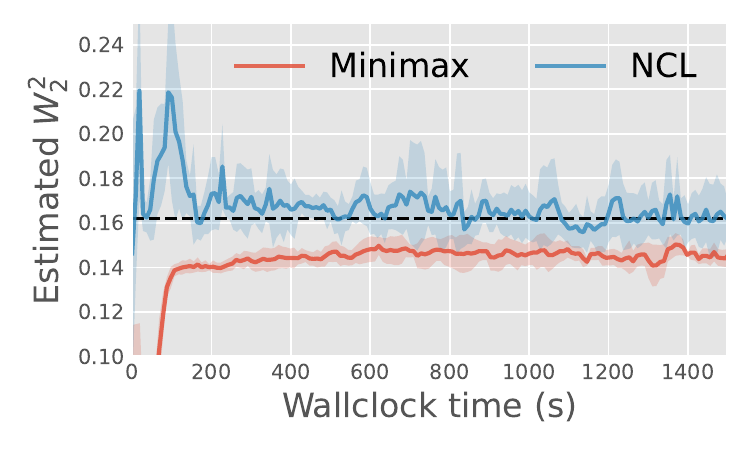}
    \vspace{-2em}
    \caption{8Gaussians $\leftrightarrow$ Circles}
    \end{subfigure}
    \caption{Neural Conservation Laws on approximating dynamical optimal transport converges in minutes. The alternative minimax formulation tends to underestimate the Wasserstein distance. Dashed line is an estimate from a discrete OT algorithm. Shaded regions denote standard deviation.}
    \label{fig:w2_plots}
\end{figure}

\section{Conclusion}

We proposed two constructions for building divergence-free neural networks from an original unconstrained smooth neural network, where either an original matrix field or vector field is parameterized. We found both to be useful in practice: the matrix field approach is generally more flexible, while the vector field approach allows the addition of extra constraints such as a non-negativity constraint. We combined these methods with the insight that the continuity equation can reformulated as a divergence-free vector field, and this resulted in a method for building deep neural networks that are constrained to always satisfy the continuity equation, which we refer to as Neural Conservation Laws. 

Currently, these models are difficult to scale due to extensive use of automatic differentiation for computing divergence of neural networks. It may be possible to combine our approach with parameterizations that provide cheap divergence computations~\citep{chen2019neural}. Nevertheless, the development of efficient batching and automatic differentiation tools is an active research area, so we expect these tools to improve as development progresses. 

\section*{Acknowledgements} 
We acknowledge the Python community
\citep{van1995python,oliphant2007python}
and the core set of tools that enabled this work, including
PyTorch \citep{paszke2019pytorch},
functorch \citep{functorch2021},
torchdiffeq \citep{torchdiffeq},
\textsc{Jax} \citep{jax2018github},
Flax \citep{flax2020github},
Hydra \citep{Yadan2019Hydra},
Jupyter \citep{kluyver2016jupyter},
Matplotlib \citep{hunter2007matplotlib},
numpy \citep{oliphant2006guide,van2011numpy}, and
SciPy \citep{jones2014scipy}. Jack Richter-Powell would also like to thank David Duvenaud and the Vector Institute for graciously supporting them over the last year.

\bibliographystyle{abbrvnat}
\bibliography{references}

\begin{thebibliography}{68}
\providecommand{\natexlab}[1]{#1}
\providecommand{\url}[1]{\texttt{#1}}
\expandafter\ifx\csname urlstyle\endcsname\relax
  \providecommand{\doi}[1]{doi: #1}\else
  \providecommand{\doi}{doi: \begingroup \urlstyle{rm}\Url}\fi

\bibitem[Almgren et~al.(1998)Almgren, Bell, Colella, Howell, and
  Welcome]{almgren_conservative_1998}
A.~S. Almgren, J.~B. Bell, P.~Colella, L.~H. Howell, and M.~L. Welcome.
\newblock A {Conservative} {Adaptive} {Projection} {Method} for the {Variable}
  {Density} {Incompressible} {Navier}–{Stokes} {Equations}.
\newblock \emph{Journal of Computational Physics}, 142\penalty0 (1):\penalty0
  1--46, May 1998.
\newblock ISSN 00219991.
\newblock \doi{10.1006/jcph.1998.5890}.
\newblock URL
  \url{https://linkinghub.elsevier.com/retrieve/pii/S0021999198958909}.

\bibitem[Amos and Kolter(2017)]{amos2017optnet}
B.~Amos and J.~Z. Kolter.
\newblock Optnet: Differentiable optimization as a layer in neural networks.
\newblock In \emph{International Conference on Machine Learning}, pages
  136--145. PMLR, 2017.

\bibitem[Amos et~al.(2017)Amos, Xu, and Kolter]{amos2017input}
B.~Amos, L.~Xu, and J.~Z. Kolter.
\newblock Input convex neural networks.
\newblock In \emph{International Conference on Machine Learning}, pages
  146--155. PMLR, 2017.

\bibitem[Andrychowicz et~al.(2016)Andrychowicz, Denil, Gomez, Hoffman, Pfau,
  Schaul, Shillingford, and De~Freitas]{andrychowicz2016learning}
M.~Andrychowicz, M.~Denil, S.~Gomez, M.~W. Hoffman, D.~Pfau, T.~Schaul,
  B.~Shillingford, and N.~De~Freitas.
\newblock Learning to learn by gradient descent by gradient descent.
\newblock \emph{Advances in neural information processing systems}, 29, 2016.

\bibitem[Arjovsky et~al.(2017)Arjovsky, Chintala, and
  Bottou]{arjovsky2017wasserstein}
M.~Arjovsky, S.~Chintala, and L.~Bottou.
\newblock Wasserstein generative adversarial networks.
\newblock In \emph{International conference on machine learning}, pages
  214--223. PMLR, 2017.

\bibitem[Arvanitidis et~al.(2021)Arvanitidis, Hansen, and
  Hauberg]{arvanitidis_latent_2021}
G.~Arvanitidis, L.~K. Hansen, and S.~Hauberg.
\newblock Latent {Space} {Oddity}: on the {Curvature} of {Deep} {Generative}
  {Models}, Dec. 2021.
\newblock URL \url{http://arxiv.org/abs/1710.11379}.
\newblock Number: arXiv:1710.11379 arXiv:1710.11379 [stat].

\bibitem[Bai et~al.(2019)Bai, Kolter, and Koltun]{bai2019deep}
S.~Bai, J.~Z. Kolter, and V.~Koltun.
\newblock Deep equilibrium models.
\newblock \emph{Advances in Neural Information Processing Systems}, 32, 2019.

\bibitem[Barbarosie(2011)]{barbarosie2011representation}
C.~Barbarosie.
\newblock Representation of divergence-free vector fields.
\newblock \emph{Quarterly of applied mathematics}, 69\penalty0 (2):\penalty0
  309--316, 2011.

\bibitem[Benamou and Brenier(2000)]{benamou2000computational}
J.-D. Benamou and Y.~Brenier.
\newblock A computational fluid mechanics solution to the monge-kantorovich
  mass transfer problem.
\newblock \emph{Numerische Mathematik}, 84\penalty0 (3):\penalty0 375--393,
  2000.

\bibitem[Berger(2003)]{berger2003panoramic}
M.~Berger.
\newblock A panoramic view of riemannian geometry.
\newblock 2003.

\bibitem[Bhatia et~al.(2013)Bhatia, Norgard, Pascucci, and
  Bremer]{Bhatia2013survey}
H.~Bhatia, G.~Norgard, V.~Pascucci, and P.-T. Bremer.
\newblock The helmholtz-hodge decomposition—a survey.
\newblock \emph{IEEE Transactions on Visualization and Computer Graphics},
  19\penalty0 (8):\penalty0 1386--1404, 2013.
\newblock \doi{10.1109/TVCG.2012.316}.

\bibitem[Bonneel et~al.(2011)Bonneel, Van De~Panne, Paris, and
  Heidrich]{bonneel2011displacement}
N.~Bonneel, M.~Van De~Panne, S.~Paris, and W.~Heidrich.
\newblock Displacement interpolation using lagrangian mass transport.
\newblock In \emph{Proceedings of the 2011 SIGGRAPH Asia conference}, pages
  1--12, 2011.

\bibitem[Bradbury et~al.(2018)Bradbury, Frostig, Hawkins, Johnson, Leary,
  Maclaurin, Necula, Paszke, Vander{P}las, Wanderman-{M}ilne, and
  Zhang]{jax2018github}
J.~Bradbury, R.~Frostig, P.~Hawkins, M.~J. Johnson, C.~Leary, D.~Maclaurin,
  G.~Necula, A.~Paszke, J.~Vander{P}las, S.~Wanderman-{M}ilne, and Q.~Zhang.
\newblock {JAX}: composable transformations of {P}ython+{N}um{P}y programs,
  2018.
\newblock URL \url{http://github.com/google/jax}.

\bibitem[Bronstein et~al.(2017)Bronstein, Bruna, LeCun, Szlam, and
  Vandergheynst]{bronstein2017geometric}
M.~M. Bronstein, J.~Bruna, Y.~LeCun, A.~Szlam, and P.~Vandergheynst.
\newblock Geometric deep learning: going beyond euclidean data.
\newblock \emph{IEEE Signal Processing Magazine}, 34\penalty0 (4):\penalty0
  18--42, 2017.

\bibitem[Cartan(1899)]{cartan1899some}
{\'E}.~Cartan.
\newblock On certain differential expressions and the pfaff problem.
\newblock In \emph{Scientific Annals of the {\'School}Normal Superior},
  volume~16, pages 239--332, 1899.

\bibitem[Che et~al.(2017)Che, Li, Jacob, Bengio, and Li]{che2016mode}
T.~Che, Y.~Li, A.~P. Jacob, Y.~Bengio, and W.~Li.
\newblock Mode regularized generative adversarial networks.
\newblock \emph{International Conference on Learning Representations}, 2017.

\bibitem[Chen(2018)]{torchdiffeq}
R.~T.~Q. Chen.
\newblock torchdiffeq, 2018.
\newblock URL \url{https://github.com/rtqichen/torchdiffeq}.

\bibitem[Chen and Duvenaud(2019)]{chen2019neural}
R.~T.~Q. Chen and D.~K. Duvenaud.
\newblock Neural networks with cheap differential operators.
\newblock \emph{Advances in Neural Information Processing Systems}, 32, 2019.

\bibitem[Chen et~al.(2018)Chen, Rubanova, Bettencourt, and
  Duvenaud]{chen2018neural}
R.~T.~Q. Chen, Y.~Rubanova, J.~Bettencourt, and D.~K. Duvenaud.
\newblock Neural ordinary differential equations.
\newblock \emph{Advances in neural information processing systems}, 31, 2018.

\bibitem[Chen et~al.(2019)Chen, Behrmann, Duvenaud, and
  Jacobsen]{chen2019residual}
R.~T.~Q. Chen, J.~Behrmann, D.~K. Duvenaud, and J.-H. Jacobsen.
\newblock Residual flows for invertible generative modeling.
\newblock \emph{Advances in Neural Information Processing Systems}, 32, 2019.

\bibitem[Do~Carmo(1998)]{do1998differential}
M.~P. Do~Carmo.
\newblock \emph{Differential forms and applications}.
\newblock Springer Science \& Business Media, 1998.

\bibitem[Eisenberger et~al.(2018)Eisenberger, Lähner, and
  Cremers]{eisenberger_divergence-free_2018}
M.~Eisenberger, Z.~Lähner, and D.~Cremers.
\newblock Divergence-{Free} {Shape} {Interpolation} and {Correspondence}, Oct.
  2018.
\newblock URL \url{http://arxiv.org/abs/1806.10417}.
\newblock Number: arXiv:1806.10417 arXiv:1806.10417 [cs].

\bibitem[Feynman et~al.(1989)Feynman, Leighton, and Sands]{feynman_1989}
R.~Feynman, R.~Leighton, and M.~Sands.
\newblock \emph{The {Feynman} {Lectures} on {Physics}: {Commemorative}
  {Issue}}.
\newblock Advanced book program. Addison-Wesley, 1989.
\newblock ISBN 978-0-201-50064-6.
\newblock URL \url{https://books.google.ca/books?id=UHp1AQAACAAJ}.

\bibitem[Finlay et~al.(2020)Finlay, Jacobsen, Nurbekyan, and
  Oberman]{finlay2020train}
C.~Finlay, J.-H. Jacobsen, L.~Nurbekyan, and A.~M. Oberman.
\newblock How to train your neural ode.
\newblock \emph{arXiv preprint arXiv:2002.02798}, 2020.

\bibitem[Flamary et~al.(2021)Flamary, Courty, Gramfort, Alaya, Boisbunon,
  Chambon, Chapel, Corenflos, Fatras, Fournier, Gautheron, Gayraud, Janati,
  Rakotomamonjy, Redko, Rolet, Schutz, Seguy, Sutherland, Tavenard, Tong, and
  Vayer]{flamary2021pot}
R.~Flamary, N.~Courty, A.~Gramfort, M.~Z. Alaya, A.~Boisbunon, S.~Chambon,
  L.~Chapel, A.~Corenflos, K.~Fatras, N.~Fournier, L.~Gautheron, N.~T. Gayraud,
  H.~Janati, A.~Rakotomamonjy, I.~Redko, A.~Rolet, A.~Schutz, V.~Seguy, D.~J.
  Sutherland, R.~Tavenard, A.~Tong, and T.~Vayer.
\newblock Pot: Python optimal transport.
\newblock \emph{Journal of Machine Learning Research}, 22\penalty0
  (78):\penalty0 1--8, 2021.
\newblock URL \url{http://jmlr.org/papers/v22/20-451.html}.

\bibitem[Gerken et~al.(2021)Gerken, Aronsson, Carlsson, Linander, Ohlsson,
  Petersson, and Persson]{Gerken2021invariance}
J.~E. Gerken, J.~Aronsson, O.~Carlsson, H.~Linander, F.~Ohlsson, C.~Petersson,
  and D.~Persson.
\newblock Geometric deep learning and equivariant neural networks, 2021.
\newblock URL \url{https://arxiv.org/abs/2105.13926}.

\bibitem[Goodfellow et~al.(2016)Goodfellow, Bengio, and
  Courville]{goodfellow2016deep}
I.~Goodfellow, Y.~Bengio, and A.~Courville.
\newblock \emph{Deep learning}.
\newblock MIT press, 2016.

\bibitem[Guermond and Quartapelle(2000)]{guermond_projection_2000}
J.-L. Guermond and L.~Quartapelle.
\newblock A {Projection} {FEM} for {Variable} {Density} {Incompressible}
  {Flows}.
\newblock \emph{Journal of Computational Physics}, 165\penalty0 (1):\penalty0
  167--188, Nov. 2000.
\newblock ISSN 00219991.
\newblock \doi{10.1006/jcph.2000.6609}.
\newblock URL
  \url{https://linkinghub.elsevier.com/retrieve/pii/S0021999100966099}.

\bibitem[Heek et~al.(2020)Heek, Levskaya, Oliver, Ritter, Rondepierre, Steiner,
  and van {Z}ee]{flax2020github}
J.~Heek, A.~Levskaya, A.~Oliver, M.~Ritter, B.~Rondepierre, A.~Steiner, and
  M.~van {Z}ee.
\newblock {F}lax: A neural network library and ecosystem for {JAX}, 2020.
\newblock URL \url{http://github.com/google/flax}.

\bibitem[Horace~He(2021)]{functorch2021}
R.~Z. Horace~He.
\newblock functorch: Jax-like composable function transforms for pytorch.
\newblock \url{https://github.com/pytorch/functorch}, 2021.

\bibitem[Hornik et~al.(1989)Hornik, Stinchcombe, and
  White]{hornik1989multilayer}
K.~Hornik, M.~Stinchcombe, and H.~White.
\newblock Multilayer feedforward networks are universal approximators.
\newblock \emph{Neural networks}, 2\penalty0 (5):\penalty0 359--366, 1989.

\bibitem[Huang et~al.(2020)Huang, Chen, Tsirigotis, and
  Courville]{huang2020convex}
C.-W. Huang, R.~T.~Q. Chen, C.~Tsirigotis, and A.~Courville.
\newblock Convex potential flows: Universal probability distributions with
  optimal transport and convex optimization.
\newblock \emph{arXiv preprint arXiv:2012.05942}, 2020.

\bibitem[Hunter(2007)]{hunter2007matplotlib}
J.~D. Hunter.
\newblock Matplotlib: A 2d graphics environment.
\newblock \emph{Computing in science \& engineering}, 9\penalty0 (3):\penalty0
  90, 2007.

\bibitem[Hurault et~al.(2022)Hurault, Leclaire, and
  Papadakis]{hurault2021gradient}
S.~Hurault, A.~Leclaire, and N.~Papadakis.
\newblock Gradient step denoiser for convergent plug-and-play.
\newblock \emph{International Conference on Learning Representations}, 2022.

\bibitem[Jagtap et~al.(2020)Jagtap, Kharazmi, and
  Karniadakis]{jagtap_conservative_2020}
A.~D. Jagtap, E.~Kharazmi, and G.~E. Karniadakis.
\newblock Conservative physics-informed neural networks on discrete domains for
  conservation laws: {Applications} to forward and inverse problems.
\newblock \emph{Computer Methods in Applied Mechanics and Engineering},
  365:\penalty0 113028, June 2020.
\newblock ISSN 00457825.
\newblock \doi{10.1016/j.cma.2020.113028}.
\newblock URL
  \url{https://linkinghub.elsevier.com/retrieve/pii/S0045782520302127}.

\bibitem[Jin et~al.(2021)Jin, Cai, Li, and Karniadakis]{jin_nsfnets_2021}
X.~Jin, S.~Cai, H.~Li, and G.~E. Karniadakis.
\newblock {NSFnets} ({Navier}-{Stokes} {Flow} nets): {Physics}-informed neural
  networks for the incompressible {Navier}-{Stokes} equations.
\newblock \emph{Journal of Computational Physics}, 426:\penalty0 109951, Feb.
  2021.
\newblock ISSN 00219991.
\newblock \doi{10.1016/j.jcp.2020.109951}.
\newblock URL \url{http://arxiv.org/abs/2003.06496}.
\newblock arXiv: 2003.06496.

\bibitem[Jones et~al.(2014)Jones, Oliphant, and Peterson]{jones2014scipy}
E.~Jones, T.~Oliphant, and P.~Peterson.
\newblock $\{$SciPy$\}$: Open source scientific tools for $\{$Python$\}$.
\newblock 2014.

\bibitem[Kelliher(2021)]{kelliher2021stream}
J.~Kelliher.
\newblock Stream functions for divergence-free vector fields.
\newblock \emph{Quarterly of Applied Mathematics}, 79\penalty0 (1):\penalty0
  163--174, 2021.

\bibitem[Kluyver et~al.(2016)Kluyver, Ragan-Kelley, P{\'{e}}rez, Granger,
  Bussonnier, Frederic, Kelley, Hamrick, Grout, Corlay,
  et~al.]{kluyver2016jupyter}
T.~Kluyver, B.~Ragan-Kelley, F.~P{\'{e}}rez, B.~E. Granger, M.~Bussonnier,
  J.~Frederic, K.~Kelley, J.~B. Hamrick, J.~Grout, S.~Corlay, et~al.
\newblock Jupyter notebooks-a publishing format for reproducible computational
  workflows.
\newblock In \emph{ELPUB}, pages 87--90, 2016.

\bibitem[Lagaris et~al.(1998)Lagaris, Likas, and
  Fotiadis]{lagaris1998artificial}
I.~E. Lagaris, A.~Likas, and D.~I. Fotiadis.
\newblock Artificial neural networks for solving ordinary and partial
  differential equations.
\newblock \emph{IEEE transactions on neural networks}, 9\penalty0 (5):\penalty0
  987--1000, 1998.

\bibitem[Li et~al.(2021)Li, Wei, Li, Yao, Zeng, and Lv]{li20213dmol}
C.~Li, W.~Wei, J.~Li, J.~Yao, X.~Zeng, and Z.~Lv.
\newblock 3dmol-net: learn 3d molecular representation using adaptive graph
  convolutional network based on rotation invariance.
\newblock \emph{IEEE Journal of Biomedical and Health Informatics}, 2021.

\bibitem[Lu et~al.(2021)Lu, Chen, Li, Wang, and Zhu]{lu2021implicit}
C.~Lu, J.~Chen, C.~Li, Q.~Wang, and J.~Zhu.
\newblock Implicit normalizing flows.
\newblock \emph{arXiv preprint arXiv:2103.09527}, 2021.

\bibitem[Makkuva et~al.(2020)Makkuva, Taghvaei, Oh, and
  Lee]{makkuva2020optimal}
A.~Makkuva, A.~Taghvaei, S.~Oh, and J.~Lee.
\newblock Optimal transport mapping via input convex neural networks.
\newblock In \emph{International Conference on Machine Learning}, pages
  6672--6681. PMLR, 2020.

\bibitem[Mao et~al.(2020)Mao, Jagtap, and
  Karniadakis]{mao_physics-informed_2020}
Z.~Mao, A.~D. Jagtap, and G.~E. Karniadakis.
\newblock Physics-informed neural networks for high-speed flows.
\newblock \emph{Computer Methods in Applied Mechanics and Engineering},
  360:\penalty0 112789, Mar. 2020.
\newblock ISSN 00457825.
\newblock \doi{10.1016/j.cma.2019.112789}.
\newblock URL
  \url{https://linkinghub.elsevier.com/retrieve/pii/S0045782519306814}.

\bibitem[Miyato et~al.(2018)Miyato, Kataoka, Koyama, and
  Yoshida]{miyato2018spectral}
T.~Miyato, T.~Kataoka, M.~Koyama, and Y.~Yoshida.
\newblock Spectral normalization for generative adversarial networks.
\newblock \emph{arXiv preprint arXiv:1802.05957}, 2018.

\bibitem[Morita(2001)]{morita2001geometry}
S.~Morita.
\newblock \emph{Geometry of differential forms}.
\newblock Number 201. American Mathematical Soc., 2001.

\bibitem[Müller(2022)]{muller2022noetherconservation}
E.~H. Müller.
\newblock Exact conservation laws for neural network integrators of dynamical
  systems, 2022.
\newblock URL \url{https://arxiv.org/abs/2209.11661}.

\bibitem[Oliphant(2006)]{oliphant2006guide}
T.~E. Oliphant.
\newblock \emph{A guide to NumPy}, volume~1.
\newblock Trelgol Publishing USA, 2006.

\bibitem[Oliphant(2007)]{oliphant2007python}
T.~E. Oliphant.
\newblock Python for scientific computing.
\newblock \emph{Computing in Science \& Engineering}, 9\penalty0 (3):\penalty0
  10--20, 2007.

\bibitem[Onken et~al.(2021)Onken, Wu~Fung, Li, and Ruthotto]{onken2021ot}
D.~Onken, S.~Wu~Fung, X.~Li, and L.~Ruthotto.
\newblock Ot-flow: Fast and accurate continuous normalizing flows via optimal
  transport.
\newblock In \emph{Proceedings of the AAAI Conference on Artificial
  Intelligence}, volume~35, 2021.

\bibitem[Paszke et~al.(2019)Paszke, Gross, Massa, Lerer, Bradbury, Chanan,
  Killeen, Lin, Gimelshein, Antiga, et~al.]{paszke2019pytorch}
A.~Paszke, S.~Gross, F.~Massa, A.~Lerer, J.~Bradbury, G.~Chanan, T.~Killeen,
  Z.~Lin, N.~Gimelshein, L.~Antiga, et~al.
\newblock Pytorch: An imperative style, high-performance deep learning library.
\newblock In \emph{Advances in neural information processing systems}, pages
  8026--8037, 2019.

\bibitem[Radford et~al.(2016)Radford, Metz, and
  Chintala]{radford2015unsupervised}
A.~Radford, L.~Metz, and S.~Chintala.
\newblock Unsupervised representation learning with deep convolutional
  generative adversarial networks.
\newblock \emph{International Conference on Learning Representations}, 2016.

\bibitem[Raissi et~al.(2017)Raissi, Perdikaris, and
  Karniadakis]{raissi_physics_2017}
M.~Raissi, P.~Perdikaris, and G.~E. Karniadakis.
\newblock Physics {Informed} {Deep} {Learning} ({Part} {I}): {Data}-driven
  {Solutions} of {Nonlinear} {Partial} {Differential} {Equations}, Nov. 2017.
\newblock URL \url{http://arxiv.org/abs/1711.10561}.
\newblock Number: arXiv:1711.10561 arXiv:1711.10561 [cs, math, stat].

\bibitem[Raissi et~al.(2019)Raissi, Perdikaris, and
  Karniadakis]{raissi_physics-informed_2019}
M.~Raissi, P.~Perdikaris, and G.~E. Karniadakis.
\newblock Physics-informed neural networks: {A} deep learning framework for
  solving forward and inverse problems involving nonlinear partial differential
  equations.
\newblock \emph{Journal of Computational Physics}, 378:\penalty0 686--707,
  2019.
\newblock ISSN 0021-9991.
\newblock \doi{https://doi.org/10.1016/j.jcp.2018.10.045}.
\newblock URL
  \url{https://www.sciencedirect.com/science/article/pii/S0021999118307125}.

\bibitem[Rao et~al.(2020)Rao, Sun, and Liu]{rao2020physics}
C.~Rao, H.~Sun, and Y.~Liu.
\newblock Physics-informed deep learning for incompressible laminar flows.
\newblock \emph{Theoretical and Applied Mechanics Letters}, 10\penalty0
  (3):\penalty0 207--212, 2020.

\bibitem[Rout et~al.(2021)Rout, Korotin, and Burnaev]{rout2021generative}
L.~Rout, A.~Korotin, and E.~Burnaev.
\newblock Generative modeling with optimal transport maps.
\newblock \emph{arXiv preprint arXiv:2110.02999}, 2021.

\bibitem[Rozen et~al.(2021)Rozen, Grover, Nickel, and Lipman]{rozen2021moser}
N.~Rozen, A.~Grover, M.~Nickel, and Y.~Lipman.
\newblock Moser flow: Divergence-based generative modeling on manifolds.
\newblock \emph{Advances in Neural Information Processing Systems}, 34, 2021.

\bibitem[Salimans et~al.(2016)Salimans, Goodfellow, Zaremba, Cheung, Radford,
  and Chen]{salimans2016improved}
T.~Salimans, I.~Goodfellow, W.~Zaremba, V.~Cheung, A.~Radford, and X.~Chen.
\newblock Improved techniques for training gans.
\newblock \emph{Advances in neural information processing systems}, 29, 2016.

\bibitem[Schroeder and Lube(2017)]{Schroeder_2017}
P.~W. Schroeder and G.~Lube.
\newblock Pressure-robust analysis of divergence-free and conforming {FEM} for
  evolutionary incompressible navier{\textendash}stokes flows.
\newblock \emph{Journal of Numerical Mathematics}, 25\penalty0 (4), dec 2017.
\newblock \doi{10.1515/jnma-2016-1101}.
\newblock URL \url{https://doi.org/10.1515%2Fjnma-2016-1101}.

\bibitem[Song and Ermon(2019)]{song2019generative}
Y.~Song and S.~Ermon.
\newblock Generative modeling by estimating gradients of the data distribution.
\newblock \emph{Advances in Neural Information Processing Systems}, 32, 2019.

\bibitem[Srivastava et~al.(2017)Srivastava, Valkov, Russell, Gutmann, and
  Sutton]{srivastava2017veegan}
A.~Srivastava, L.~Valkov, C.~Russell, M.~U. Gutmann, and C.~Sutton.
\newblock Veegan: Reducing mode collapse in gans using implicit variational
  learning.
\newblock \emph{Advances in neural information processing systems}, 30, 2017.

\bibitem[Sturm and Wexler(2022)]{sturm2022discreteconservation}
P.~O. Sturm and A.~S. Wexler.
\newblock Conservation laws in a neural network architecture: enforcing the
  atom balance of a julia-based photochemical model (v0.2.0).
\newblock \emph{Geoscientific Model Development}, 15\penalty0 (8):\penalty0
  3417--3431, 2022.
\newblock \doi{10.5194/gmd-15-3417-2022}.
\newblock URL \url{https://gmd.copernicus.org/articles/15/3417/2022/}.

\bibitem[Tong et~al.(2020)Tong, Huang, Wolf, Van~Dijk, and
  Krishnaswamy]{tong2020trajectorynet}
A.~Tong, J.~Huang, G.~Wolf, D.~Van~Dijk, and S.~Krishnaswamy.
\newblock Trajectorynet: A dynamic optimal transport network for modeling
  cellular dynamics.
\newblock In \emph{International Conference on Machine Learning}, pages
  9526--9536. PMLR, 2020.

\bibitem[Van Der~Walt et~al.(2011)Van Der~Walt, Colbert, and
  Varoquaux]{van2011numpy}
S.~Van Der~Walt, S.~C. Colbert, and G.~Varoquaux.
\newblock The numpy array: a structure for efficient numerical computation.
\newblock \emph{Computing in Science \& Engineering}, 13\penalty0 (2):\penalty0
  22, 2011.

\bibitem[Van~Rossum and Drake~Jr(1995)]{van1995python}
G.~Van~Rossum and F.~L. Drake~Jr.
\newblock \emph{Python reference manual}.
\newblock Centrum voor Wiskunde en Informatica Amsterdam, 1995.

\bibitem[Warner(1983)]{warner1983foundations}
F.~Warner.
\newblock \emph{Foundations of Differentiable Manifolds and Lie Groups}.
\newblock Graduate Texts in Mathematics. Springer, 1983.
\newblock ISBN 9780387908946.
\newblock URL \url{https://books.google.ca/books?id=iaeUqc2yQVQC}.

\bibitem[Xue et~al.(2020)Xue, Beatson, Adriaenssens, and Adams]{xue20a}
T.~Xue, A.~Beatson, S.~Adriaenssens, and R.~Adams.
\newblock Amortized finite element analysis for fast {PDE}-constrained
  optimization.
\newblock In H.~D. III and A.~Singh, editors, \emph{Proceedings of the 37th
  International Conference on Machine Learning}, volume 119 of
  \emph{Proceedings of Machine Learning Research}, pages 10638--10647. PMLR,
  13--18 Jul 2020.

\bibitem[Yadan(2019)]{Yadan2019Hydra}
O.~Yadan.
\newblock Hydra - a framework for elegantly configuring complex applications.
\newblock Github, 2019.
\newblock URL \url{https://github.com/facebookresearch/hydra}.

\end{thebibliography}

\newpage

\appendix

\section{Preliminaries: Differential forms in $\Real^n$}
\label{sec:prelim:k-forms-on-Rn}

We provide an in-depth explanation of our divergence-free construction in this section. 
% Readers who wish to skip the theoretical results may skip to \Cref{sec:model-spec} and \eqref{eq:div_A} for a summary.

% We are interested in learning divergence-free vector fields in $\Real^n$.
We will use the notation of \emph{differential forms} in $\Real^n$ that will help derivations and proofs in the paper. Below we provide basic definitions and properties of differential forms, for more extensive introduction see \eg, \citep{do1998differential,morita2001geometry}.

We let $x=(x_1,\ldots,x_n)\in\Real^n$, and $dx_1,\ldots,dx_n$ the coordinate differentials, \ie, $dx_i(x)=x_i$ for all $i\in [n]=\set{1,\ldots,n}$. 
The linear vector space of $k$-forms in $\Real^n$, denoted $\Lambda^k(\Real^n)$, is the space of $k$-linear alternating maps
\begin{equation}
    \varphi: \overbrace{\Real^n \times \cdots \times \Real^n}^{k\text{ times}} \too \Real
\end{equation}
A $k$-linear alternating map $\varphi$ is linear in each of its coordinates and satisfies $\varphi(\ldots,v,\ldots,u,\ldots) = -\varphi(\ldots,u,\ldots,v,\ldots)$. The space $\Lambda^k(\Real^n)$ is a linear vector space with a basis of alternating $k$-forms denoted $dx_{i_1}\wedge \cdots \wedge dx_{i_k}$. The way these $k$-forms act on $k$-vectors, $v_1,\ldots,v_k \in \Real^n$ is as a signed volume function:
\begin{equation}\label{e:wedge}
    dx_{i_1}\wedge \cdots \wedge dx_{i_k}(v_1,\ldots,v_k) = \det \brac{dx_{i_r}(v_s)}_{r,s\in [k]}
\end{equation}
Expanding an arbitrary element $\omega\in \Lambda^k(\Real^n)$ in this basis gives 
\begin{equation}
    \omega = \sum_{i_1<i_2<\cdots < i_k} a_{i_1\cdots i_k} \, dx_{i_1}\wedge\cdots \wedge dx_{i_k} = \sum_{I} a_I \, dx_{I}
\end{equation}
where $i_1,\ldots,i_k \in [n]$, $I=(i_1,\ldots,i_k)$ are multi-indices, $a_I$ are scalars, and $dx_I=dx_{i_1}\wedge\cdots \wedge dx_{i_k}$.

The space of \emph{differential} $k$-forms (also called $k$-forms in short), denoted $\gA^k(\Real^n)$, is defined by smoothly assigning to each $x\in \Real^n$ a $k$-linear alternating form $w\in \Lambda^k(\Real^n)$. That is 
\begin{equation}
    w(x) = \sum_{I} f_I(x) dx_I
\end{equation}
where $f_I:\Real^n\too \Real$ are smooth scalar functions. Note that  $\gA^0(\Real^n)$ is the space of smooth scalar functions over $\Real^n$. 
The differential operator can be seen as a linear operator $d:\gA^0(\Real^n)\too \gA^1(\Real^n)$ defined by
\begin{equation}
df(x) = \sum_{i=1}^n \frac{\partial f}{\partial x_i}(x) dx_i
\end{equation}
The exterior derivative $d:\gA^k(\Real^n)\too\gA^{k+1}(\Real^n)$ is a linear differential operator generalizing the differential to arbitrary differential $k$-forms: 
\begin{equation}\label{e:d}
    d\omega(x) = \sum_I df_I \wedge dx_I
\end{equation}
where the exterior product $\omega \wedge \eta$ of two forms $\omega=\sum_I f_I dx_I$, $\eta=\sum_J g_J dx_J$ is defined by extending \eqref{e:wedge} linearly, that is, $\omega\wedge \eta = \sum_{I,J} f_I g_J dx_I\wedge dx_J$. An imporant property of the exterior derivative is that $dd\omega=0$ for all $\omega$. This property can be checked using the definition in \eqref{e:d} and the symmetry of mixed partials, $\tfrac{\partial^2 f}{\partial x_i \partial x_j}=\tfrac{\partial^2 f}{\partial x_j \partial x_i}$.

The hodge operator $\star:\gA^{k}(\Real^n)\too \gA^{n-k}(\Real^n)$ matches to each $k$-form an $n-k$-form by extending the rule
\begin{equation}\label{eq:hodge_star}
    \star (dx_I) = (-1)^\sigma dx_J 
\end{equation}
linearly, where $\sigma=\mathrm{sign}([I,J])$ is the sign of the permutation $(I,J)=(i_1,\ldots,i_k,j_1,\ldots,i_{n-k})$ of $[n]$. The hodge star is (up to a sign) its own inverse, $\star \star \omega= (-1)^{k(n-k)}\omega$, for all $\omega\in \gA^k(\Real^n)$. The $d$ operator has an adjoint operator $\delta:\gA^{k}(\Real^n)\too \gA^{k - 1}(\Real^n)$ defined by 
\begin{equation}
    \delta = (-1)^{n(k+1)+1}\star d \star.
\end{equation}
A vector field $v(x)=(v_1(x),\ldots,v_n(x))$ in $\Real^n$ can be identified with a 1-form $v=\sum_{i=1}^n v_i(x) dx_i$. The divergence of $v$, denoted $\div(v)$ can be expressed using exterior derivative :
\begin{align} \label{e:d_star_div}
    d\star v &= d\sum_{i=1}^n v_i \star dx_i = \sum_{i=1}^n \frac{\partial v_i}{\partial x_i} dx_1\wedge\cdots\wedge dx_n =\mathrm{div}(v) dx_1\wedge\cdots\wedge dx_n,
\end{align}
where in the second equality we used the definition of $\star$, the definition of $d$ and the fact that $dx_i \wedge dx_i=0$ (can be seen from the repeating rows in the matrix inside the determinant in \eqref{e:wedge}). Note that the $n$-form $\mathrm{div}(v) dx_1\wedge \cdots\wedge dx_n$ is identified via $\star$ with the function (\ie, $0$-form) $\mathrm{div}(v)$.

\subsection{Derivations of \eqref{eq:mu} and \eqref{eq:delta_nu}}
\label{app:derivations}

We will need the following identity below, for $\mu \in \Omega^{k}(\gM)$, expressed as 
\[ \mu = \sum_{I} \alpha_I \star (dx^I)\]
\vspace{-12pt}
we have
\begin{equation}
\star \mu = (-1)^{k(n\text{-}k)} \sum_{I} \alpha_I dx^I \label{eq:star_id}
\end{equation}
which follows by linearity of $\star$ over $C^{\infty}(\gM)$ and the identity $\star \star = (-1)^{k(n-k)} \text{id}$.

Here, we derive the expression for \eqref{eq:mu}.
For $\mu \in \Omega^{n-2}(\gM)$, the exterior derivative is given by \eqref{e:d}. However, in this case, since $dx^i \wedge dx^i = 0$, we can expand 
\begin{align}
    d\mu &= \sum_{i,j} d\mu_{i,j} \wedge \star(dx^i\wedge dx^j) \\
    &= \sum_{i,j} \sum_{k} \frac{\partial \mu_k}{\partial x^k} dx^k \wedge \star(dx^i\wedge dx^j)\\
    &= \sum_{i,j} \left[ \frac{\partial \mu_{ij}}{\partial x^{i}} dx^{i} \wedge \star(dx^i\wedge dx^j)  + \frac{\partial \mu_{ij}}{\partial x^{j}} dx^{j} \wedge \star(dx^i\wedge dx^j)  \right]\\
    % &= \sum_{i,j} \left[ \frac{\partial \mu_{ij}}{\partial x^{i}} dx^{i} \wedge \star(dx^i\wedge dx^j) \right] + \sum_{i,j} \left[- \frac{\partial \mu_{ji}}{\partial x^{j}} dx^{j} \wedge \star(dx^i\wedge dx^j)  \right]\\
    &= \sum_{i,j} \left[ \frac{\partial \mu_{ij}}{\partial x^{i}} dx^{i} \wedge \star(dx^i\wedge dx^j) \right] + \sum_{i,j} \left[\frac{\partial \mu_{ji}}{\partial x^{i}} dx^{i} \wedge \star(dx^j\wedge dx^i)  \right]\label{eq:dmu_reidx}\\
    &= \sum_{i,j} \left[ \frac{\partial \mu_{ij}}{\partial x^{i}} dx^{i} \wedge \star(dx^i\wedge dx^j) \right] + \sum_{i,j} \left[ \frac{\partial \mu_{ij}}{\partial x^{i}} dx^{i} \wedge \star(dx^i\wedge dx^j)  \right] \label{eq:dmu_anti}\\
    &= 2 \sum_{i,j} \left[ \frac{\partial \mu_{ij}}{\partial x^{i}} dx^{i} \wedge \star(dx^i\wedge dx^j) \right] \\
    &= 2  \sum_i \left[\sum_j \frac{\partial \mu_{ij}}{\partial x^{j}} \star(dx^i)\right]
\end{align}
where \eqref{eq:dmu_reidx} follows by re-indexing $i$ as $j$, and then \eqref{eq:dmu_anti} by anti-symmetry $\mu_{ij} = - \mu_{ji}$ and $\star(dx^i \wedge dx^j) = - \star (dx^j \wedge dx^i)$ (the sign flips cancel). Applying $\star$ then gives (by \eqref{eq:star_id}), 
\[ \star d\mu = \star \left[ 2  \sum_i \sum_j \frac{\partial \mu_{ij}}{\partial x^{j}} \star(dx^i)\right] = 2 (-1)^{n-1} \sum_i \sum_j \frac{\partial \mu_{ij}}{\partial x^{j}} dx^i \]
 To derive equation \eqref{eq:delta_nu}, it suffices to start by noting $\delta = \star d \star$, and so
\begin{align}
\delta \nu &= \star d \star \nu \\
&= \star d \star \sum_i \nu_i \star(dx^i)\\
&= \star d (-1)^{n-1} \sum_i \nu_i dx^i\\
&= (-1)^{n-1} \star \sum_i \sum_j {\partial \nu_i \over \partial x^j} dx^j \wedge dx^i\\
&= (-1)^{n-1} \sum_i \sum_j {\partial \nu_i \over \partial x^j} \star(dx^j \wedge dx^i)\\
&= - (-1)^{n-1} \sum_i \sum_j {\partial \nu_i \over \partial x^j} \star(dx^i \wedge dx^j)\\
&= -(-1)^{n-1} \left( \sum_{i < j} {\partial \nu_i \over \partial x^j} \star(dx^i \wedge dx^j) + \sum_{i > j} {\partial \nu_i \over \partial x^j} \star(dx^i \wedge dx^j) \right)\\
&= -(-1)^{n-1} \left( \sum_{i < j} {\partial \nu_i \over \partial x^j} \star(dx^i \wedge dx^j) -  \sum_{i < j} {\partial \nu_j \over \partial x^i} \star(dx^i \wedge dx^j) \right)\\
&= - (-1)^{n-1} \sum_{i < j} \left[ {\partial \nu_i \over \partial x^j}  - {\partial \nu_j \over \partial x^i} \right] \star(dx^i \wedge dx^j)
\end{align}
then since $\star(dx^i \wedge dx^j) = - \star(dx^j \wedge dx^i)$, we have 
\begin{align}
    &\text{ } - (-1)^{n-1} \sum_{i < j} \left[ {\partial \nu_i \over \partial x^j}  - {\partial \nu_j \over \partial x^i} \right] \star(dx^i \wedge dx^j)\\
    % &= - (-1)^{n-1} \left( \frac{1}{2} \sum_{i < j} \left[ {\partial \nu_i \over \partial x^j}  - {\partial \nu_j \over \partial x^i} \right] \star(dx^i \wedge dx^j) + \frac{1}{2} \sum_{i < j} \left[ {\partial \nu_i \over \partial x^j}  - {\partial \nu_j \over \partial x^i} \right] \star(dx^i \wedge dx^j) \right)\\
    &= - (-1)^{n-1} \left( \frac{1}{2} \sum_{i < j} \left[ {\partial \nu_i \over \partial x^j}  - {\partial \nu_j \over \partial x^i} \right] \star(dx^i \wedge dx^j) + \frac{1}{2} \sum_{i < j} \left[ {\partial \nu_j \over \partial x^i}  - {\partial \nu_i \over \partial x^j} \right] \star(dx^j \wedge dx^i) \right)\\
    &= - (-1)^{n-1} \left( \frac{1}{2} \sum_{i < j} \left[ {\partial \nu_i \over \partial x^j}  - {\partial \nu_j \over \partial x^i} \right] \star(dx^i \wedge dx^j) + \frac{1}{2} \sum_{i > j} \left[ {\partial \nu_i \over \partial x^j}  - {\partial \nu_j \over \partial x^i} \right] \star(dx^i \wedge dx^j) \right)\\
    &=  - (-1)^{n-1} \left( \frac{1}{2} \sum_{i, j} \left[ {\partial \nu_i \over \partial x^j}  - {\partial \nu_j \over \partial x^i} \right] \star(dx^i \wedge dx^j) \right)\\
\end{align}
\section{Proofs and derivations}
\label{apx:derivations}

\subsection{Universality}
\label{app:universality}

\universalitythm*
\begin{proof}
First, consider a divergence-free vector field and its representation as a 1-form $ v\in\gA^1(\sT)$. Then $v$ being divergence-free means $d\star v=0$. We denote by $c=(c_1,\ldots,c_n)\in\Real^n$ a constant vector field; note that $c$ is also a well defined vector field over $\sT$. We will use the notation $c$ to denote the corresponding constant 1-form. We claim $\star v = d\mu + \star c$, where $\mu\in\gA^{n-2}(\sT)$, and $\star c\in\gA^{n-1}(\gS)$ is constant. This can be shown with Hodge decomposition \citep{morita2001geometry} of $\star v\in \gA^{n-1}(\sT)$:
\begin{equation}
\star v = d\mu + \delta \tau + h,
\end{equation}
where $\mu\in \gA^{n-2}(\sT)$, $\tau\in \gA^{n}(\sT)$ and $h\in\gA^{n-1}$ is a harmonic $n$-1-form. Note that the harmonic $n$-1-forms over $\gT$ are simply constant. Taking $d$ of both sides leads to 
\begin{equation}
d\star v = d\delta \tau.
\end{equation}
Since we assumed $v$ is div-free, $d\star v=0$ and we get $d\delta\tau=0$. Since $d\delta \tau=0=\delta \delta \tau$ then $\delta \tau$ is harmonic (constant) as well. 

So far we showed $\star v = d\mu + \star c$, which shows the universality of the matrix-field representation (up to a constant). To show universality of the vector-field representation we need to show that: 
\begin{equation}
\set{d\mu+\star c \ \big \vert \  \mu\in \gA^{n-2}(\sT)}=\set{d\delta\nu+\star c \ \big  \vert \ \nu\in \gA^{n-1}(\sT)}
\end{equation}
The left inclusion $\supset$ is true since $\delta \nu \in \gA^{n-2}(\gT)$. For the right inclusion $\subset$ we take an arbitrary $\mu \in \gA^{n-2}(\sT)$ and decompose it with Hodge:
$\mu = d\omega + \delta \nu + h$, where $\omega\in \gA^{n-3}(\sT)$, $\nu\in\gA^{n-1}(\sT)$, and $h\in \gA^{n-2}(\sT)$ is harmonic. Taking $d$ of both sides leaves us with $d\mu = d\delta \nu$ that shows that $d\mu+\star c$ is included in the right set.

% any div-free vector field $u\in\Omega^{n-1}(\gM)$ has the form $u=d\nu$, where $\nu\in\Omega^{n-2}(\gM)$. Indeed use Hodge to write
% \begin{equation*}
%     u = d\nu +\delta \mu + h
% \end{equation*}
% since $H^1(\gM)\cong 0$ we have $h\equiv 0$, and taking $d$ we have
% \begin{equation*}
%     du = d\delta\mu
% \end{equation*}
% As $u$ is div-free, \ie, $du=0$, then $d\delta \mu=0$, and therefore $\delta\mu$ is harmonic and therefore equals $0$. Now we need to show that 
% \begin{equation}
%     \{d\nu\vert \nu\in\Omega^{n-2}(\gM)\} = \{d\delta \tau \vert \tau\in\Omega^{n-1}(\gM)\}
% \end{equation}
% to prove that first not that $ \supset$ is clear by definition. To show the other inclusion, we use Hodge again for $\nu$, that is $\nu = d\omega + \delta\rho + h$. Taking $d$ of that we get that $d\nu = d\delta\rho$. 
\end{proof}

\subsection{Stabilizing training for fluid simulations}\label{app:stable_losses}

In order to stabilize training, we can modify the loss terms
$L_F,L_G,L_I$ to avoid division by $\rho$. As before $v = [\rho,\rho u]$,and
\begin{align}
    \tilde L_{F} &= \norm{\rho^2(\rho u)_t - \rho (\rho_t) \rho u + \rho [D(\rho u) (\rho u]) - [\nabla \rho \otimes \rho u](\rho u) +  \rho^2\nabla p}_{\Omega}^2 \\
     \tilde L_{\div} &= \norm{\tilde \nabla \rho \cdot v}_{\Omega}\\
    \tilde L_{I} &= \norm{\rho u(0,\cdot) - \rho_0 u_0(0,\cdot)}_{\Omega}^2 + \norm{\rho(0,\cdot) - \rho_0(0,\cdot) }_{\Omega}^2 \\
    \tilde L_{G} &= \norm{\rho u\cdot n}_{\partial \Omega}^2 
\end{align}
In practice, we noticed this improved training stability significantly, which is intuitive since the possibility of a division by 0 is removed. The derivation of $\tilde L_{G}$, and $\tilde L_I$ is simply scaling by $\rho$. We derive $\tilde L_F$ and $\tilde L_\div$ by repeatedly applying the product rules for the Jacobian and divergence operators and solving for $\rho^{2,3}$ scaled copies of the residuals. Below, we use the convention that the gradient and divergence operators only act in spatial variables, but the $\tilde \nabla$, $\tilde \div$ operators include time
\begin{equation}
\tilde \nabla \rho(t,x) = \begin{pmatrix}\frac{\partial \rho}{\partial t} (t,x) \\ \frac{\partial\rho}{\partial x_1}(t,x)  \\ \vdots \end{pmatrix} \qquad \tilde \div(u) = \frac{\partial u_0}{\partial t} + \sum_{i=1}^d \frac{\partial u_i}{\partial x_i}
\end{equation}

\textbf{Derivation of $\tilde L_\div$}:\\
To derive $\tilde L_\div$, we consider 
\begin{equation} 
\tilde \div\left( \rho \begin{pmatrix} 1 \\ u \end{pmatrix} \right) = \tilde \nabla \rho \cdot \begin{pmatrix} 1 \\ u \end{pmatrix}  + \rho \tilde \div \begin{pmatrix} 1 \\ u \end{pmatrix} 
\end{equation}
by construction $\div \left( \rho \begin{pmatrix} 1 \\ u \end{pmatrix} \right) = 0$, and since $\tilde \div \begin{pmatrix} 1 \\ u \end{pmatrix} = \div(u)$, multiplying both sides by $\rho$ we find 
\begin{equation} 
0 = \tilde \nabla \rho \cdot v + \rho^2 \div(u) \implies \rho^2 \div(u) = -\tilde \nabla \rho \cdot v 
\end{equation}

\textbf{Derivation of $\tilde L_F$}
To derive $\tilde L_F$, we will start at the end. We multiply the momentum term of the Euler system
\begin{equation} 
\frac{\partial u}{\partial t} + [Du]u + \frac{\nabla p}{\rho} = 0 \implies \rho^3 {\partial u \over \partial t} + \rho^3 [Du]u + \rho^2 \nabla p = 0
\end{equation}
we start by applying the product rule to $\rho u$
\begin{equation}
    {\partial (\rho u) \over \partial t} = \frac{\partial \rho}{\partial t} u + \rho \frac{\partial u }{\partial t}
\end{equation}
multiplying by $\rho^2$ and solving for $\rho^3 \frac{\partial u}{\partial t}$ yields
\begin{equation} \label{eq:apx-deriv1}
\rho^3 \frac{\partial u }{\partial t} =  \rho^2 {\partial (\rho u) \over \partial t} - \rho \frac{\partial \rho}{\partial t} \rho u 
\end{equation}
which can be computed without dividing by $\rho$. Now we apply the Jacobian scalar product rule
\begin{equation} 
D(\rho u) = \nabla \rho \otimes u + \rho Du
\end{equation}
contracting with $\rho^2 u$ yields that 
\begin{equation} 
D(\rho u) \rho^2 u = [\nabla p \otimes (\rho u)](\rho u) + \rho^3 [Du]u
\end{equation}
which gives 
\begin{equation} \label{eq:apx-deriv2} 
\rho^3 [Du]u = D(\rho u) \rho^2 u - [\nabla p \otimes (\rho u)](\rho u)
\end{equation}
which can also be computed without invoking division by $\rho$. Together, \eqref{eq:apx-deriv1} and \eqref{eq:apx-deriv2} yield $\tilde L_F$.

\section{Implementation Details}
\label{apx:details}
\subsection{Tori Example Details}
For the Tori example, we used the matrix formulation of the NCL model. The matrix was parameterized as the output of a 8 layer, 512-wide Multi-Layer Perceptron with softplus activation. We trained this model for 600,000 steps of stochastic gradient descent using a batch size of 1000. The weight vector $\gamma$ was fixed with $\gamma_F = 3\times 10^{-3}$, $\gamma_I = 30$, $\gamma_{\div} = 0.01$. Instead of choosing a fixed set of colocation points (as is common in the PINN literature see \cite{raissi_physics_2017}), we sampled uniformly on the unit square $[0,1]^2$.  For the Finite-Element reference solution, we solved the system on a 50 x 50 grid with periodic boundary conditions implemented with a mixed Lagrange element scheme. The splitting scheme used was the inviscid case of \cite{guermond_projection_2000}, with time step $dt=0.001$. 

% \begin{figure}
%     \centering
%     \begin{subfigure}[b]{\linewidth}
%     \begin{subfigure}[b]{\linewidth}
%     \includegraphics[width=\linewidth]{figs/2d-fluids/2d_tori__dens.png}
%     \caption{Our models density (top) vs FEM reference density (bottom)}
%     \end{subfigure}
%     \begin{subfigure}[b]{\linewidth}
%     \includegraphics[width=\linewidth]{figs/2d-fluids/2d_tori__vels.png}
%     \end{subfigure}
%     \caption{Our models velocity (top) vs FEM reference velocity (bottom)}
%     \end{subfigure}
%     \caption{Larger plot of results from \ref{sec:fluids:manifold-experiments} with NCL given more training time. We have reasonable agreement with the reference FEM solution.}
%     \label{fig:apx:large-tori}
% \end{figure}

\begin{figure}
     \centering
     \includegraphics[width=\linewidth]{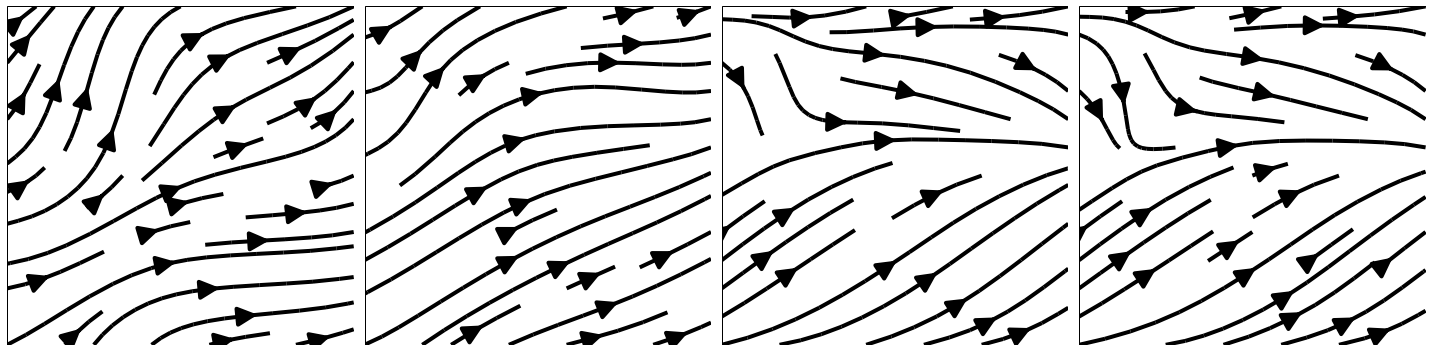}\\
     \includegraphics[width=\linewidth]{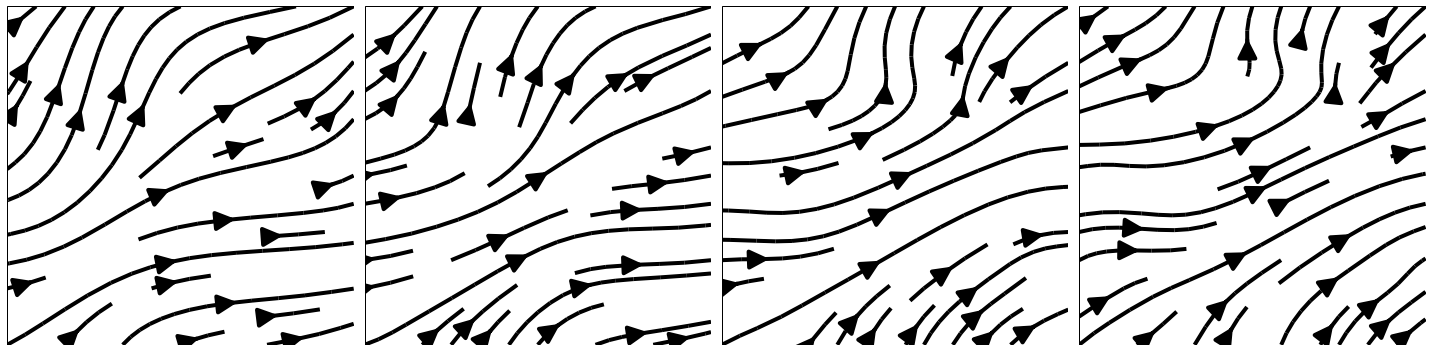}\\
     \includegraphics[width=\linewidth]{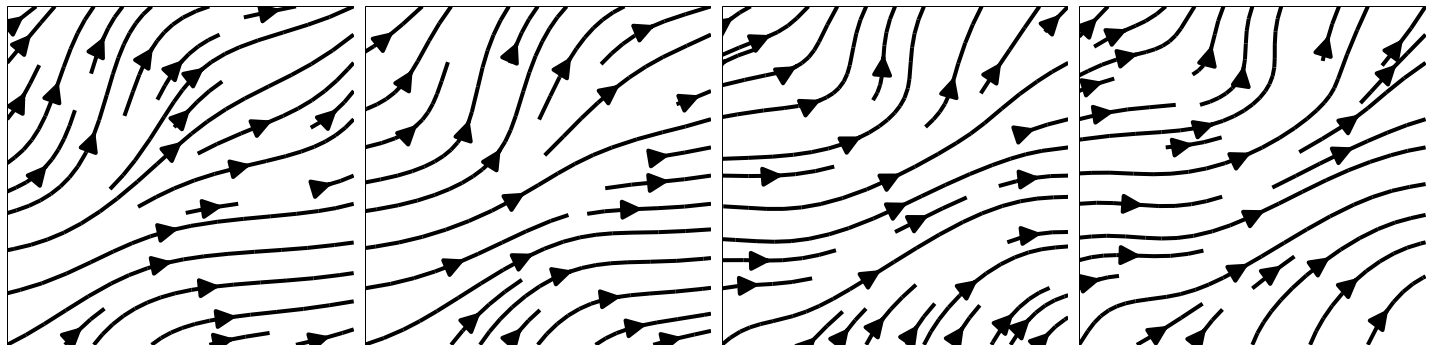}\\
    \caption{Streamplots showing the velocity field from a Physics Informed Neural Network (\textit{top}), our method (\textit{middle}) vs a reference FEM solution (\textit{bottom}). While both models minimize the loss effectively and fit the initial conditions, the PINN fails to learn the correct evolution of the velocity.}
    \label{fig:fluids:2d-tori-experiment-velocity}
\end{figure}

% In Figure- we show the results of training a basic PINN of the same size for the same number of steps. 

\subsection{3d Ball Example Details}
For the comparison in \Cref{fig:apx:large-3d} we used a 4-layer, 128 wide feed forward network for both the Curl PINN and the NCL. Stopped training both models after 10000 steps of stochastic gradient descent with batch size of 1000. While this is much less than \Cref{sec:fluids:manifold-experiments}, the difference can be explained by the initial condition being much less complex. For the NCL model, we used $\gamma_F = 0.1, \gamma_{\div}=0.1, \gamma_G = 0.1, \gamma_I = 30$. For the CURL model, we used  $\gamma_F = 0.1,\gamma_G = 0.1, \gamma_\text{cont}= 10, \gamma_I = 30$

A larger plot of the comparison is shown in \Cref{fig:apx:large-3d}.

\begin{figure}
    \centering
    \begin{subfigure}[b]{0.7\linewidth}
    \begin{subfigure}[b]{\linewidth}
    \includegraphics[width=\linewidth]{figs/3d-fluids/3d_slice_streamplot_curl_1.png}
    \end{subfigure}
    \begin{subfigure}[b]{\linewidth}
    \includegraphics[width=\linewidth]{figs/3d-fluids/3d_slice_densplot_curl_1.png}
    \end{subfigure}
    \caption{Curl Model: Flow (top) and Density (bottom) at $t=0,0.25,0.5$}
    \end{subfigure}
    
    \begin{subfigure}[b]{0.7\linewidth}
    \begin{subfigure}[b]{\linewidth}
    \includegraphics[width=\linewidth]{figs/3d-fluids/3d_slice_streamplot_ours_1.png}
    \end{subfigure}
    \begin{subfigure}[b]{\linewidth}
    \includegraphics[width=\linewidth]{figs/3d-fluids/3d_slice_densplot_ours_1.png}
    \end{subfigure}
    \caption{Our model (NCL): Flow (top) and Density (bottom) at $t=0,0.25,0.5$}
    \end{subfigure}

    \caption{Larger version of comparison shown in \Cref{sec:fluids:3d-experiments}. While our model fits the initial conditions and convects the density along the flow lines as expected, the curl model fails to do so.}
    \label{fig:apx:large-3d}
\end{figure}

\begin{figure}
    \centering
    \includegraphics[width=0.7\linewidth]{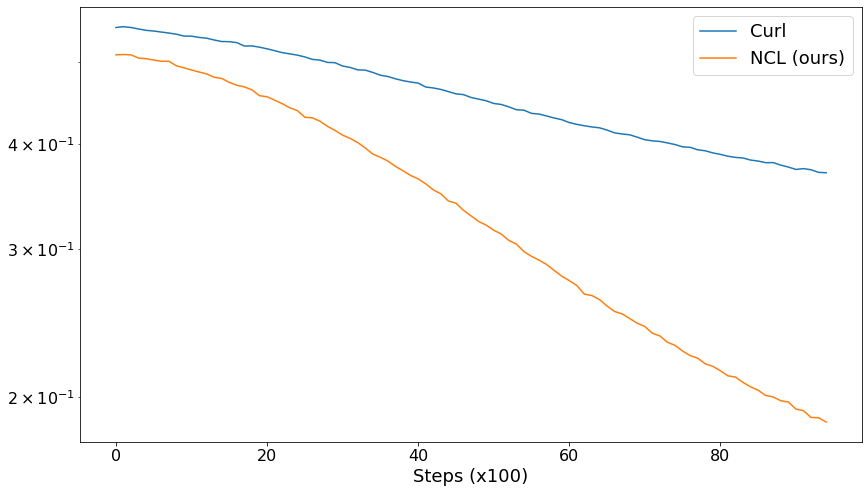}
    \caption{Training loss for NCL model (ours) plotted against the Curl model, for the 3D unit ball fluid experiment. Both models achieve a similar order of magnitude loss but exhibit qualitatively different results.  }
    \label{fig:apx:3d-train-loss}
\end{figure}

\end{document}